\useunder{\uline}{\ul}{}
\def\v{{\bf v}}
\def\P{{\bf P}}
\def\X{{\bf X}}
\def\Y{{\bf Y}}
\def\x{{\bf x}}
\def\y{{\bf y}}
\def\Xbb{{\mathbb{X}}}
\def\Ybb{{\mathbb{Y}}}
\def\Z{{\bf Z}}
\def\gg{{\gamma}}
\def\ga{{\boldsymbol \gamma}}
\def\R{{\mathbb{R}}}
\def\G{\gamma}
\def\GG{{\bm \G}}
\def\GGs{\GG^s}
\def\GGv{\GG^v}
\def\w{{\textbf w}}
\def\v{{\textbf v}}
\def\R{{\mathbb{R}}}
\def\Cbf{{\mathbf C}}
\def\D{{\mathbf D}}
\newcommand{\COOT}{\text{COOT}}
\newcommand{\AGW}{\text{AGW}}
\newcommand{\GW}{\text{GW}}
\newcommand{\ie}{\textit{i.e.}}
\newtheorem{theorem}{Theorem}
\newtheorem{proposition}{Proposition}
\newtheorem{corollary}{Corollary}
\newtheorem{definition}{Definition}
\theoremstyle{remark}
\title{Revisiting invariances and introducing priors in Gromov-Wasserstein distances}
\author{
  Pinar Demetci\\
  Department of Computer Science\\
  Center for Computational Molecular Biology\\
  Brown University\\
  Providence, RI 02912\\
  \texttt{pinar\textunderscore demetci@brown.edu}\\
  \And
  Quang Huy Tran\\
  Univ. Bretagne-Sud, IRISA \\
  Vannes, France 56000\\
  CMAP, Ecole Polytechnique, IP Paris\\
  Palaiseau, France 91120\\
  \texttt{quang-huy.tran@univ-ubs.fr} \\
  \AND
  Ievgen Redko \thanks{Co-corresponding authors}\\
  Huawei Technologies \\
  Univ. Lyon, UJM-Saint-Etienne \\
  CNRS, UMR 5516\\ 
  Saint-Étienne, France 42100 \\
  \texttt{ievgen.redko@huawei.com} \\
  \And
  Ritambhara Singh$^{*}$\\
  Department of Computer Science\\
  Center for Computational Molecular Biology\\
  Brown University\\
  Providence, RI 02912\\
  \texttt{ritambhara@brown.edu}
}
\begin{document}
\maketitle
\begin{abstract}
%Commenting things out so we can pull sentences we like from previous versions

    %Ievgen's initial version:
    % We propose a new optimal transport-based divergence allowing to control its invariance to rigid transformations and sign change and incorporate prior knowledge in it. We illustrate the utility of our proposal across several downstream tasks including domain adaptation, computational biology.

    %Huy's version:
    % Gromov-Wasserstein distance has found its success in many applications in machine learning. However, its main advantage - the invariant to isometry, can sometimes be too flexible, thus undesirable. We propose a new optimal transport-based divergence which is, not only more reasonably rigid to isometric transformations, but also better exploit the prior knowledge on the input data. We provide theoretical understanding on the proposed divergence, in particular, its ability to explicitly control the degree of invariant based on the raw data representations. Then, we illustrate its usefulness across several downstream tasks including transfer learning scenarios in machine learning and single-cell multi-omic alignment applications.
    
    %Pinar's revision attempt (this version is submitted to NeurIPS as of 11 May ~3:35pm EST. As long as the modifications aren't too big, we can still revise it for paper submission):
    Gromov-Wasserstein distance has found many applications in machine learning due to its ability to compare measures across metric spaces and its invariance to isometric transformations. However, in certain applications, this invariance property can be too flexible, thus undesirable. Moreover, the Gromov-Wasserstein distance solely considers pairwise sample similarities in input datasets, disregarding the raw feature representations. We propose a new optimal transport-based distance, called Augmented Gromov-Wasserstein, that allows for some control over the level of rigidity to transformations. It also incorporates feature alignments, enabling us to better leverage prior knowledge on the input data for improved performance. We present theoretical insights into the proposed metric. We then demonstrate its usefulness for single-cell multi-omic alignment tasks and a transfer learning scenario in machine learning.
    %Note: The only submission guideline about the abstract is that it needs to be 1 paragraph.
\end{abstract}
\let\clearpage\relax

\section{Introduction}
Optimal transport (OT) theory provides a fundamental tool for comparing and aligning probability measures omnipresent in machine learning (ML) tasks. Following the least effort principle, OT and its associated metrics offer many attractive properties that other divergences, such as the popular Kullback-Leibler or Jensen-Shannon divergences, lack. For instance, OT  borrows key geometric properties of the underlying ``ground'' space on which the distributions are defined \cite{Villani} and enjoys non-vanishing gradients in case of measures having disjoint support \cite{arjovsky17a}. OT theory has also been extended to a much more challenging case of probability measures supported on different metric-measure spaces. In this scenario,  Gromov-Wasserstein (GW) distance \cite{memoli_gw} seeks an optimal matching between points in the supports of the considered distributions by using the information about the distortion of intra-domain distances after such matching. Since its proposal by Memoli \cite{memoli_gw} and further extensions by Peyre \textit{et al} \cite{peyre2016gromov}, GW has been successfully used in a wide range of applications, including computational biology \cite{ WaddingtonOT,Pamona, UniPort, SpaOTsc, Demetci2020,Demetci2022, PASTE}, generative modeling \cite{bunne_gan}, and reinforcement learning \cite{GW-VAE,FickingerC0A22}. 

\paragraph{Limitations of prior work} Successful applications of GW distance are often attributed to its invariance to distance-preserving transformations (also called isometries) of the input domains. Since GW considers only intra-domain distances, it is naturally invariant to any transformation that does not change them. 
% For example, one can think of graphs or shapes which are invariant to rotations.
While these invariances can be a blessing in many applications, for example, comparing graphs with the unknown ordering of nodes, they may also become a curse when one must choose the ``right" isometry from those for which GW attains the same value. How one would break such ties while keeping the attractive properties of the GW distance? To the best of our knowledge, there are no prior works addressing this question. 

Additionally, GW distances are often used in tasks where one may have some \textit{a priori} knowledge about the mapping between the two considered spaces. For example, in single-cell applications, mapping a group of cells in similar tissues across species helps understand evolutionarily conserved and diverged cell types and functions \cite{kriebel2022uinmf}. This cross-species cell mapping, when performed using OT, may benefit from the knowledge about an overlapping set of orthologous genes \footnote {Genes that diverge after a speciation event from a single gene in a common ancestor, but their main functions, as well as the genetic sequences, are largely conserved across the different species.}. GW formulation does not offer any straightforward way of incorporating this knowledge, which may lead to its suboptimal performance in the above-mentioned tasks. 

\paragraph{Our contributions} In this paper, we aim to address the drawbacks of the GW distance mentioned above. We propose to augment GW distance with an additional loss term that allows tightening its invariances and incorporating prior knowledge on how the two input spaces should be compared. Overall, our contributions can be summarized as follows:
\begin{enumerate}
    \item We present a new metric on the space of probability measures that allows for better control over the isometric transformations of the GW distance;
    \item We provide some theoretical analysis of the properties of the proposed distance, as well as an experimental example that illustrates its unique features vividly;
    \item We empirically demonstrate that such a new metric is more efficient than previously proposed cross-domain OT distances in several single-cell data integration tasks and its generalizability to the ML domain.
\end{enumerate}
The paper is organized as follows. Section 2 presents key notions from the OT theory utilized in the rest of the paper. Section 3 presents our proposed distance and analyzes its theoretical properties. In Section 4, we present several empirical studies for the single-cell alignment task and demonstrate the applicability of our metric in another ML domain. We conclude our paper in Section 5 by discussing limitations and potential future work.

\paragraph{Notations} In what follows, we denote by $\Delta_{n}=\{w \in (\mathbb{R}_{+})^{n}:\ \sum_{i=1}^{n} w_{i}=1\}$ the simplex histogram with $n$ bins. We use $\otimes$ for tensor-matrix multiplication, \ie, for a tensor $L =(L_{i,j,k,l})_{i,j,k,l}$ and a matrix $B = (B_{i,j})_{i,j}$, the tensor-matrix multiplication $L \otimes B$ is the matrix $(\sum_{k,l} L_{i,j,k,l} B_{k,l})_{i,j}$. We use $\langle \cdot, \cdot \rangle$ for the matrix scalar product associated with the Frobenius norm $\|\cdot\|_{F}$. Finally, we write $\bm{1}_d \in \mathbb{R}^d$ for a $d$-dimensional vector of ones.
% and denote all matrices by upper-case bold letters (\ie, $\X$) or lower-case Greek letters (\ie, $\GG$); all vectors are written in lower-case bold (\ie, $\xbf$). 
We use the terms ``coupling matrix'', ``transport plan'' and ``correspondence matrix'' interchangeably.
A point in the space can also be called ``an example'' or ``a sample''. Given an integer $n \geq 1$, denote $[n] := \{ 1, ..., n\}$.

%In this paper, we focus only on finite-dimensional space, equipped with the Euclidean distance.

%Given two histograms $\mu, \nu$, denote $\Pi(\mu, \nu)$ the set of admissible couplings, \textit{i.e.} any matrix $P$ such that $\mu = (\sum_j P_{ij})_i$ and $\nu = (\sum_i P_{ij})_j$.
% !TEX root = ./0_neurips_fused_main.tex
\section{Preliminary knowledge}
In this section, we briefly present the necessary background knowledge required to understand the rest of this paper. This includes introducing the Kantorovich formulation of the OT problem and two OT-based distances proposed to match samples across incomparable spaces.
\paragraph{Kantrovich OT and Wasserstein distance} Let $\X \in \R^{n\times d}$ and $\Y\in \R^{m\times d}$ be two input matrices,  $\Cbf_{ij} = c(\x_i, \y_j)$ be a cost (or ground) matrix defined using some lower semi-continuous cost function $c: \R^d \times \R^d \rightarrow \R_{\geq 0}$. Given two discrete probability measures $\mu \in \Delta_n$ and $\nu\in \Delta_m$, Kantorovich formulation of OT seeks a coupling $\gg$ minimizing the following quantity:
\begin{equation}
  W_\Cbf(\mu,\nu) = \min_{\ga \in \Pi(\mu,\nu)} \langle \Cbf, \GG \rangle,
  \label{eq:wasserstein}
 \end{equation}
where $\Pi(\mu,\nu)$ is the space of probability distributions over $\mathbb{R}^2$ with marginals $\mu$ and $\nu$. Such an optimization problem defines a proper metric on the space of probability distributions called the Wasserstein distance.

% In the discrete version of the problem, one deals with the empirical measures supported on vectors $\X = \{\x_i\}_{i=1}^{n}\in\Xbb$ and $\Y = \{\y_i\}_{i=1}^{m}\in\Ybb$ as follows:
% $$\hat{\mu}=\frac{1}{n}\sum_{i=1}^{n} \delta_{\x_i}, \text{ and } \hat{\nu}=\frac{1}{m}\sum_{i=1}^{m} \delta_{\y_i},$$
% where $\delta_{\\x}$ are Dirac measures located at $\x$. In this case, $\Pi(\mu,\nu)$ denotes the polytope of matrices $\ga$ such that $\ga\bm{1} = \one_{n}, \ga^T\bm{1} = \one_{m}$ and \eqref{eq:wasserstein} reads:
%  \begin{equation}
%  W_\C(\hat{\mu},\hat{\nu}) = \min_{\ga \in \Pi(\one_{n},\one_{m})}  \langle \ga, \C \rangle_F,
%  \label{eq:kanto}
%  \end{equation}
% where $\langle\cdot, \cdot\rangle_F$ is the Frobenius dot product, $\C \geq 0$ is a cost matrix $\in \mathbb{R}^{n_1\times n_2}$, representing the pairwise transportation costs. One can also add an entropic regularization $E(\ga) := \sum_{ij}\ga_{ij}(\log \ga_{ij}-1)$ to \eqref{eq:kanto} to obtain a strongly convex optimization problem with smoother solutions that can be obtained using a simple matrix balancing algorithm \cite{cuturi:2013}. 

\paragraph{Gromov-Wasserstein distance}
When samples of input matrices live in different spaces, \ie, $\X \in \R^{n\times d}$ and $\Y\in \R^{m\times d'}$ with $d \neq d'$, they become incomparable, \textit{i.e.} it is no longer possible to define a cost function $c$ as the distance between two points across the available samples. In this case, one cannot use the Wasserstein distance defined above. To circumvent this limitation, one can use the Gromov-Wasserstein (GW) distance \cite{memoli_gw}, defined as follows:
\begin{align}
\label{eq:gw}
    \text{GW}(\X, \Y, \mu, \nu, d_X, d_Y) := \min_{\GG \in \Pi(\mu, \nu)} \mathcal{L}_{GW} (\GG)
\end{align}
where
\begin{align}
\label{eq:gw_func}
    \mathcal{L}_{\text{GW}} (\GG) := \sum_{i,j,k,l} \big( d_X(x_i,x_k) - d_Y(y_j,y_l) \big)^2 \gg_{i,j}\gg_{k,l} = \langle L(\D_X, \D_Y) \otimes \GG, \GG \rangle.
\end{align}
Here, the tensor $L(\D_X, \D_Y)$ is defined by $\big( L(\D_X, \D_Y) \big)_{i,j,k,l} = \big( d_X(x_i,x_k) - d_Y(y_j,y_l) \big)^2$, where $(x_i, x_k) \in \R^2$ and $(y_j, y_k) \in \R^2$ are tuples of 1D coordinates of samples in $\X$ and $\Y$ and $d_X$ and $d_Y$ are proper metrics so that $(\D_X)_{ik} = d_X(x_i,x_k)$ and $(\D_Y)_{jl} = d_Y(y_j,y_l)$.

% with $L:\mathbb{R} \times \mathbb{R} \rightarrow \mathbb{R}_{+}$ an arbitrary loss function, usually the quadratic-loss or Kullback-Leibler divergence. 
% Similarly, one can define an entropic version of the Gromov-Wasserstein distance as done in \cite{PeyreCS16}. 
% In our paper, we only consider the $L_2-$ distance. For convenience, we only write $(X, \mu_x)$ instead of $(X, || \cdot ||_2, \mu_x$

% Given two datasets represented as matrices $X \in \mathbb R^{n_x \times d_x}$ and $Y \in \mathbb R^{n_y \times d_y}$ contains $n_x$ and $n_y$ points in $\mathbb R^{d_x}$ and $\mathbb R^{d_y}$, respectively. We also assign two discrete probability measures $\mu_x \in \Delta_{n_x}$ and $\mu_y \in \Delta_{n_y}$, respectively.

\paragraph{CO-Optimal transport}
Redko \textit{et al.} \cite{COOT} introduced an alternative to GW distance, termed CO-Optimal transport (COOT). Rather than relying on the intra-domain distance matrices $\D_X$ and $\D_Y$ like GW does, COOT instead takes into account the feature information (\textit{i.e.} the coordinates of the samples) and jointly learns two couplings, corresponding to the sample and feature alignments. More precisely, COOT distance between two input matrices $\X$ and $\Y$ is defined by
\begin{equation}
  \label{eq:co-optimal-transport}
 \begin{split}
     \COOT(\X, \Y, \mu, \nu, \mu', \nu') := \min_{\GGs \in\Pi(\mu,\nu),\GGv\in\Pi(\mu',\nu')} \mathcal{L}_{\COOT} (\GGs, \GGv)
    \end{split}
 \end{equation}  
where 
$$\mathcal{L}_\COOT (\GGs, \GGv) :=  \sum_{i,j,k,l} L(x_{ik},y_{jl})\GGs_{i,j}\GGv_{k,l} = \langle L(\X,\Y) \otimes \GGv,  \GGs \rangle$$ 
with $\mu' \in \Delta_d$ and $\nu' \in \Delta_{d'}$ being empirical distributions associated with the features (columns) of $\X$ and $\Y$. In what follows, we consider $L(x_{ik}, y_{jl}) = (x_{ik} - y_{jl})^2$ and write simply $\GW(\X, \Y)$ and $\COOT(\X, \Y)$ when $\mu, \nu, \mu', \nu'$ are uniform and when the choice of $d_X$ and $d_Y$ is of no importance.
% In COOT, the features are also equipped with probability measures $\mu_{fx} \in \Delta_{d_x}$ $\mu_{fy} \in \Delta_{d_y}$.

%To avoid confusion, it is important to note that the notion of \textit{feature space} in Fused GW (FGW) distance \cite{} does \textbf{not} have the same meaning as the one in COOT. Each element of the former is associated to a point in the sample space, for example, each point is assigned to a color (feature). By contrast, the feature in the latter is precisely the coordinates of a point, thus defined only in the finite-dimensional setting.

% \begin{equation}
%  \label{eq:co-optimal-transport}
% \begin{split}
%     \COOT(\X, \Y, \hat{\mu}, \hat{\nu}, \hat{\mu}', \hat{\nu}') := \min_{\GGs \in\Pi(\hat{\mu},\hat{\nu}),\GGv\in\Pi(\hat{\mu}',\hat{\nu}')} \quad \sum_{i,j,k,l} L(\X_{i,k},\Y_{j,l})\GGs_{i,j}\GGv_{k,l},
% \end{split}
% \end{equation}
% where $\hat{\mu}'$ and $\hat{\nu}'$ are empirical distributions associated with the features (columns) of $\X$ and $\Y$. 
% In COOT, the features are also equipped with probability measures $\mu_{fx} \in \Delta_{d_x}$ $\mu_{fy} \in \Delta_{d_y}$.

\section{Our contributions}
Here, we first start by outlining the motivation for our proposed divergence, highlighting the different properties of GW distance and COOT. Then, we detail our new metric that interpolates between the two, followed by a theoretical study of its properties.
\subsection{Motivation}
\paragraph{Invariances of GW distance} The invariances encoded by GW distance are characterized by the condition where 
$\text{GW}(\X, \Y) = 0$. In the discrete setting, this is equivalent to the existence of a measure-preserving isometry $f: \R^d \to \R^{d'}$, that is $f_\# \mu = \nu$ and $d_X(\cdot, \cdot) = d_{f(X)}(\cdot, \cdot)$. In particular, this also implies that $\X$ and $\Y$ have the same cardinality.

In other words, GW distance remains unchanged under any isometric transformation of input data. This favorable property has contributed much to the success and popularity of GW distance, where in many applications, the isometries naturally appear. However, since there are infinitely many isometries, not all are equally desirable. For instance, a rotation of the digit $6$ seen as a discrete measure can either lead to its slight variation for small angles or to a digit 9 when the angle is close to 180 degrees. In both cases, however, the GW distance remains unchanged, although it is clearly detrimental for telling the two distinct objects apart. %In \cite{gwil}, the authors also mentioned that multiple runs may be needed for GW distance to find the right optimal policy in imitation learning that captures the ``true'' isometry from a set of feasible ones. One way of breaking this lack of identifiability may consist in ``guiding'' the GW problem in how it should align the two spaces, by imposing a prior derived from a human knowledge available for this task. However, it is not immediately clear how to achieve this, as GW discards the raw data representations, thus does not allow to constrain the dimension matching.

\paragraph{Invariances of COOT} Unlike GW, COOT has fewer degrees of freedom in terms of invariance to global isometric transformations as it is limited to permutations of rows and columns of the two matrices, and not all isometric transformations can be achieved via such permutations. As an example, Appendix Figure 1 shows the empirical effect of the sign change and image rotation in a handwritten digit matching task, where GW is invariant to such transformations, while COOT is not. 
Additionally, as shown in \cite{COOT}, COOT distance vanishes precisely when there exist two bijections $\sigma_s: [n] \to [m]$ and $\sigma_f: [d] \to [d']$ such that
\begin{itemize}
    \item $(\sigma_s)_{\#} \mu = \nu$ and $(\sigma_f)_{\#} \mu' = \nu'$ (which also imply that $n = m, d = d'$).
    \item $x_{ij} = y_{\sigma_s(i)\sigma_f(j)}$, for every $(i, j) \in [n] \times [d]$.
\end{itemize}
Therefore, COOT is strictly positive for any two datasets of different sizes both in terms of features and samples, making it much more restrictive than GW. It thus provides a fine-grained control when comparing complex objects, yet it lacks the robustness of GW to frequently encountered transformations between the two datasets. 

\subsection{Augmented Gromov-Wasserstein distance}
\begin{figure}[!t]
\centering
\includegraphics[width=\linewidth]{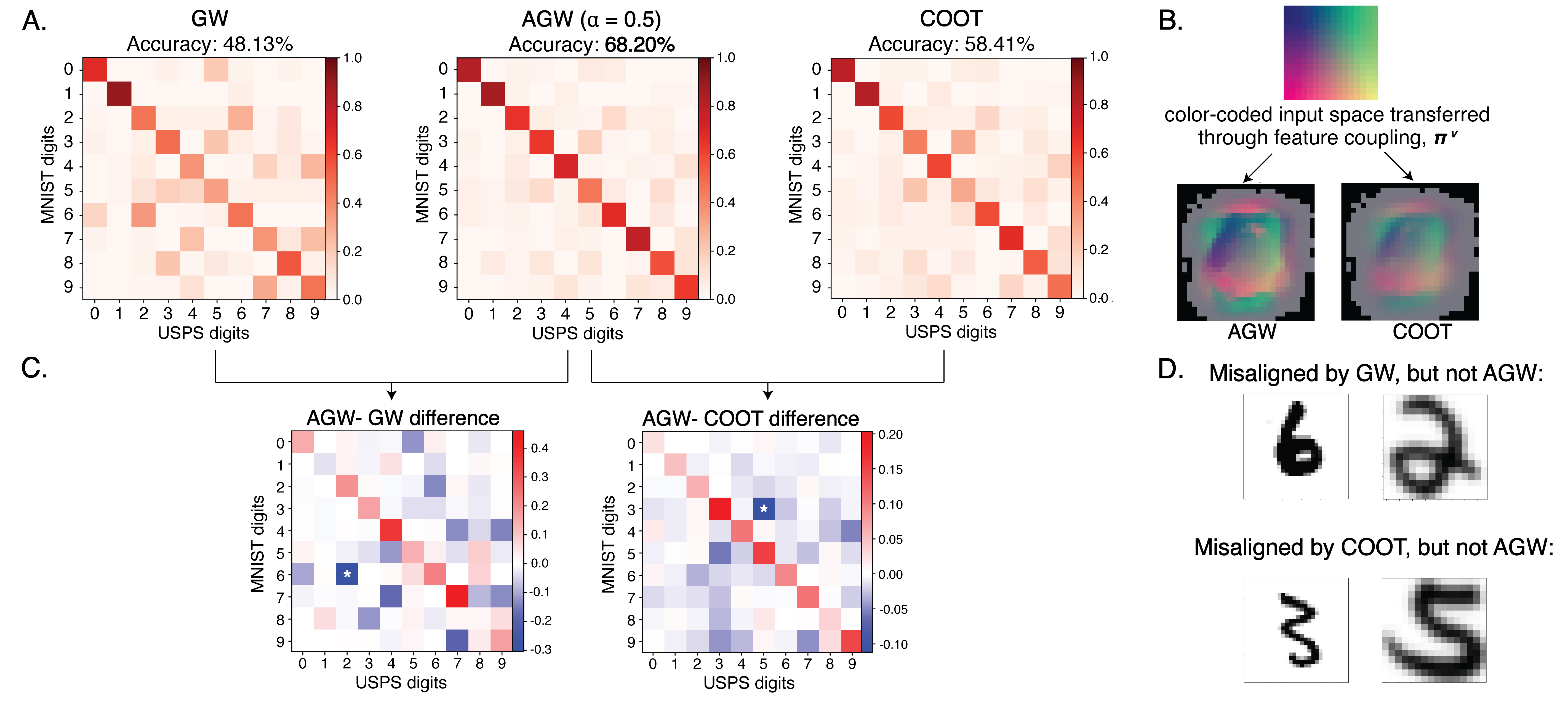}
\caption{Aligning digits from MNIST and USPS datasets. \textbf{(A)} Confusion matrices of GW, AGW with $\alpha=0.5$ and COOT; \textbf{(B)} Feature coupling $\GGv$ of AGW compared to COOT; \textbf{(C)} Difference between the sample couplings obtained with AGW and GW or COOT; \textbf{(D)} Illustration of a case from (C) where GW's and COOT's invariances are detrimental for obtaining a meaningful comparison, while AGW remains informative.}
\label{fig:mnist}
\end{figure}
Given the above discussion on the invariances of COOT and GW distance, it appears natural to propose a novel distance, that we term \textbf{augmented GW} (AGW), interpolating between them as follows:
\begin{align}
\label{eq:scootr}
\text{$\AGW$}_{\alpha}(\X, \Y) &:=  \min_{ \scriptsize{\begin{matrix}\GGs \in\Pi(\mu,\nu),\\ \GGv \in \Pi(\mu',\nu')\end{matrix}}} \alpha \mathcal{L}_{\text{GW}} (\GGs) + (1-\alpha) \mathcal{L}_{\COOT} (\GGs, \GGv) \\
&=\min_{ \scriptsize{\begin{matrix}\GGs \in\Pi(\mu,\nu),\\ \GGv \in \Pi(\mu',\nu')\end{matrix}}} \langle \alpha L(\D_X,\D_Y)\otimes \GGs + (1-\alpha) L(\X,\Y) \otimes \GGv,  \GGs \rangle.
\end{align}
% (Huy: need to rewrite this paragraph, given that we already introduce COOT in the previous section)
% In practice, we propose to consider $\P$ as a coupling matrix between the features of $\X$ and $\Y$ with $\mathbb{P} = \Pi(\v,\v')$ for $\v \in \Delta_{d}$ and $\v' \in \Delta_{d'}$, respectively. 
%Our intuition behind this is that, in practice the relationships between the dimensions of $\Xbb$ and $\Ybb$ are either $1$-to-$1$ correspondences, or they admit a more coarse relationship with each dimension of $\Xbb$ a subset of possible candidate matchings in $\Ybb$.
% Such a choice, leads to a divergence defined as an interpolation of two OT-distances: the GW distance and a recently proposed cross-domain OT distance termed Co-Optimal transport \cite{COOT}. 
One may see that the $\AGW$ problem always admits a solution. Indeed, as the objective function is continuous and the sets of admissible couplings are compact, the existence of minimum and minimizer is then guaranteed.

Our proposed interpolation between COOT and GW distance offers several important benefits. First, the loss term in COOT ensures that $\AGW$ will take different values for any two isometries whenever $d \neq d'$. Intuitively, in such a case, $\AGW$'s value will depend on how ``far'' a given isometry is from a permutation of rows and columns of the input matrices. Thus, we restrict a very broad class of (infinitely many) transformations that GW cannot distinguish and tell them apart by assessing whether or not they can be approximately obtained by simply swapping 1D elements in input matrices. %[can mention local isometries here]

Second, combining COOT and GW distance allows us to effectively influence the optimization of $\GGs$ by introducing priors on feature matchings through $\GGv$ and vice versa. This can be achieved by penalizing the costs of matching certain features in the COOT loss term to force the optimization of $\GGs$ and take it into account. These two key properties explain our choice of calling it ``augmented": on the one hand, we equip GW distance with an ability to provide more fine-grained comparisons between objects, while on the other hand, we incorporate into it a possibility of guiding the matching using available prior knowledge on the feature level. 

\paragraph{Illustration} We illustrate $\AGW$ on a problem of aligning handwritten digits from MNIST dataset (28$\times$28 pixels) with those from USPS dataset (16$\times$16 pixels) in Figure \ref{fig:mnist}, where AGW with $\alpha=0.5$ outperforms both GW and COOT in alignment accuracy (Panel A). When we investigate the digit pairs that benefit the most from the interpolation in AGW, we notice that misalignment between 6 and 2 in Gromov-Wasserstein OT, and misalignment between 3 and 5 in COOT improves the most (Panel C, highlighted by white asterisks). Panel D visualizes examples of digit pairs misaligned \footnote{Here, we define ``aligned pairs'' as pairs of digits with the highest coupling probabilities} by GW distance or COOT but correctly aligned with their own digits by AGW. We observe from these examples that 6-2 misalignment by GW distance is likely due to the fact that one is close to the reflection of the other across the y-axis. Similarly, COOT confuses 3 and 5 as one can easily obtain 3 from 5 by a local pixel permutation of the upper half of the images. Panel B visualizes the feature couplings obtained by AGW (on left) and COOT (on right). The feature coupling by COOT confirms that COOT allows for a reflection across the y-axis on the upper half of the image, but not on the lower half. With the interpolation in AGW, both of these misalignment cases improve, likely because (1) the correct feature alignments in the lower half of the images prevent 6 and 2 from being matched to each other and (2) GW distance is non-zero for 5-3 matches since GW will only be invariant to global isometries. In Appendix, we also show that providing supervision on feature alignments to restrict local reflections further improves AGW's performance.

% From it, we can see that AGW with $\alpha=0.5$ achieves the highest matching accuracy compared to COOT and GW (panel A). Additionally, we illustrate how finding the right trade-off in terms of invariance allows to hit the sweetspot between COOT and GW. As shown in panels C and D, GW confuses digits 6 and 2, most likely due to the fact that one is close to the reflection of the other across the y-axis. Similar, COOT confuses 3 and 5 as one can easily obtain 3 from 5 by a pixel permutation. AGW finds the right balance between the two invariances and avoid their respective pitfalls.  

\paragraph{Optimization} For simplicity, let us suppose $n = m$ and $d = d'$. Thanks to the squared loss in the GW and COOT loss terms, the computational trick in \cite{peyre2016gromov} can be applied, which helps reduce the overall complexity of $\AGW$ from $O(n^4 + n^2 d^2)$ to $O(n^3 + dn^2 + nd^2)$. To solve the $\AGW$ problem, we use the block coordinate descent (BCD) algorithm, where one alternatively fixes one coupling and minimizes with respect to the other. Thus, each iteration consists in solving two optimal transport problems. To further accelerate the optimization, entropic regularization can be used \cite{cuturi:2013} on either $\GGs$, $\GGv$, or both. Details of the algorithm can be found in \Cref{alg:bcd}. In practice, we use POT package \cite{flamary2021pot}, which contains built-in functions to solve the OT, GW and COOT problems.

\begin{algorithm}[!t]
    \caption{BCD algorithm to solve $\AGW$ \label{alg:bcd}}
    \begin{algorithmic}[t]
      % \STATE {\textbf{Input:}} $\bA \in \bbR^{n_1, d_1}, \bB \in \bbR^{n_2, d_2}$, $\lambda_1, \lambda_2, \varepsilon$
      \STATE Initialize $\GGs$ and $\GGv$
      \REPEAT
      \STATE Calculate $L_v = L(\X, \Y) \otimes \GGs$.
      \STATE For fixed $\GGs$, solve the OT problem: $\GGv \in \arg\min_{\GG \in \Pi(\mu', \nu')} \langle L_v, \GG \rangle$.
      \STATE Calculate $L_s = L(\X, \Y) \otimes \GGv$.
      \STATE For fixed $\GGv$, solve the AGW problem: $\GGs \in \arg\min_{\GG \in \Pi(\mu, \nu)} \alpha \mathcal{L}_{\text{GW}} (\GGs) + (1-\alpha) \langle L_s, \GGs \rangle$.
      \UNTIL{convergence}
\end{algorithmic}
\end{algorithm}

\subsection{Theoretical analysis}

Intuitively, given the structure of the objective function, we expect that $\AGW$ should share similar properties with GW and COOT, namely the existence of a minimizer, the interpolation between GW distance and COOT when the interpolation parameter varies, and the relaxed triangle inequality (since COOT and GW distance are both metrics). The following result summarizes these basic properties, and their proofs are in Appendix Section 1.
\begin{proposition}
\label{prop:basic_prop}
For every $\alpha \in [0, 1]$,
\begin{enumerate}
    \item Given two input matrices $\X$ and $\Y$, when $\alpha \to 0$ (or $1$), one has $\text{$\AGW$}_{\alpha}(\X, \Y) \to \text{COOT}(\X, \Y)$ (or $\text{GW}(\X, \Y)$).

    \item $\AGW$ satisfies the relaxed triangle inequality: for any input matrices $\X, \Y, \Z$, one has $\text{$\AGW$}_{\alpha}(\X, \Y) \leq 2 \big( \text{$\AGW$}_{\alpha}(\X, \Z) + \text{$\AGW$}_{\alpha}(\Z, \Y) \big)$.
\end{enumerate}
\end{proposition}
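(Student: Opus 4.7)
For the first part my plan is to reduce it to continuity of the value function in $\alpha$. Reading the definition of $\AGW$ directly, at $\alpha = 1$ the feature coupling $\GGv$ drops out of the objective and the problem reduces to $\min_{\GGs \in \Pi(\mu,\nu)} \mathcal{L}_{\text{GW}}(\GGs) = \GW(\X, \Y)$; symmetrically, at $\alpha = 0$ the GW term vanishes and $\AGW_0(\X, \Y) = \COOT(\X, \Y)$. The limit claims as $\alpha \to 0$ and $\alpha \to 1$ then follow from continuity of $\alpha \mapsto \AGW_\alpha(\X, \Y)$ on $[0,1]$, which I would get from Berge's Maximum Theorem: the objective $(\alpha,\GGs,\GGv) \mapsto \alpha \mathcal{L}_{\text{GW}}(\GGs) + (1-\alpha) \mathcal{L}_{\COOT}(\GGs,\GGv)$ is jointly continuous and the feasible set $\Pi(\mu,\nu)\times\Pi(\mu',\nu')$ is compact and independent of $\alpha$, so the value is continuous in $\alpha$.

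\textbf{Strategy for the relaxed triangle inequality.} My plan here is the classical gluing-lemma argument, carried out simultaneously on samples and features. Let $(\GGs^{1}, \GGv^{1})$ be optimal for $\AGW_\alpha(\X, \Z)$ and $(\GGs^{2}, \GGv^{2})$ optimal for $\AGW_\alpha(\Z, \Y)$. On the sample side I form the three-way tensor $\pi_{ijk} = \GGs^{1}_{ij}\GGs^{2}_{jk}/\mu_Z(j)$ on the support of $\mu_Z$ and let $\tilde\GGs$ be its marginal on the $(i,k)$ indices; on the feature side I repeat the same construction with $\mu'_Z$ to obtain $\tilde\GGv$. By the gluing lemma, $(\tilde\GGs, \tilde\GGv) \in \Pi(\mu_X, \mu_Y) \times \Pi(\mu'_X, \mu'_Y)$ is an admissible candidate for $\AGW_\alpha(\X,\Y)$. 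Plugging it into the $\AGW$ objective, I would then apply the elementary inequality $(a+b)^2 \le 2(a^2+b^2)$ to the distortion splitting $d_X - d_Y = (d_X - d_Z) + (d_Z - d_Y)$ to obtain $\mathcal{L}_{\text{GW}}(\tilde\GGs) \le 2\bigl(\mathcal{L}_{\text{GW}}(\GGs^{1}) + \mathcal{L}_{\text{GW}}(\GGs^{2})\bigr)$, and the same inequality to the entrywise cost $(x_{ik} - y_{jl})^2$ with $z_{jl}$ as intermediate to get $\mathcal{L}_{\COOT}(\tilde\GGs, \tilde\GGv) \le 2\bigl(\mathcal{L}_{\COOT}(\GGs^{1},\GGv^{1}) + \mathcal{L}_{\COOT}(\GGs^{2},\GGv^{2})\bigr)$. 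Combining with weights $\alpha$ and $1-\alpha$ yields the claim with factor $2$.

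\textbf{Where the real work sits.} Neither distortion bound is hard in isolation; each is essentially the standard computation used to prove the metric-on-equivalence-classes properties of $\GW$ and $\COOT$. The one point that needs care is that a \emph{single} pair $(\tilde\GGs, \tilde\GGv)$ must simultaneously control both loss terms, which works because the GW term only involves the sample coupling while the COOT term involves both, and the sample and feature gluings are constructed by the same recipe through the intermediate marginals $\mu_Z$ and $\mu'_Z$. A minor caveat is the case where entries of $\mu_Z$ or $\mu'_Z$ vanish, which I would handle by restricting the gluing to the respective supports so that the division in the definition of $\pi_{ijk}$ is well-posed.
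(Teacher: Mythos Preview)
Your proposal is correct. For Part 2, your gluing-lemma argument with the elementary bound $(a+b)^2 \le 2(a^2+b^2)$ is exactly what the paper does, including the simultaneous gluing on samples and features; the paper simply writes out the double sum explicitly rather than stating the two $\mathcal{L}_{\GW}$ and $\mathcal{L}_{\COOT}$ bounds separately.

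For Part 1 you take a genuinely different route. The paper establishes explicit sandwich bounds: using optimal couplings $\GGs_1$ for $\GW$ and $(\GGs_0,\GGv_0)$ for $\COOT$ as suboptimal test points in $\AGW_\alpha$, it shows
\[
\alpha\,\GW(\X,\Y) \;\le\; \AGW_\alpha(\X,\Y) \;\le\; \alpha\,\GW(\X,\Y) + (1-\alpha)\,\langle L(\X,\Y)\otimes\GGv_0,\GGs_1\rangle,
\]
and the analogous pair around $(1-\alpha)\,\COOT(\X,\Y)$, then squeezes. Your approach instead observes that $\AGW_0=\COOT$ and $\AGW_1=\GW$ by inspection and appeals to continuity of the value in $\alpha$ via Berge (which is trivially available here since the feasible set is compact and $\alpha$-independent). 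Your argument is cleaner and avoids constructing auxiliary test points; the paper's argument is more elementary in that it never names an abstract continuity theorem and, as a byproduct, gives quantitative two-sided bounds on $\AGW_\alpha$ in terms of $\GW$ and $\COOT$ that your route does not directly supply.
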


% As $\alpha$ tends to zero, the $\text{$\AGW$}_{\alpha}(\mathbb X, \mathbb Y)$ approaches $\text{COOT}(\mathbb X, \mathbb Y)$ and as $\alpha$ tends to 1, it approaches $\text{GW}\left((X, \mu_{sx}), (Y, \mu_{sy}) \right)$:
% \begin{align*}
%     \lim_{\alpha \rightarrow 0} \text{GWC}_{\alpha}(\hat{\mu}, \hat{\nu}, \hat{\mu'}, \hat{\nu'}) &= \text{COOT}(\hat{\mu}, \hat{\nu}, \hat{\mu'}, \hat{\nu'})\\
%     \lim_{\alpha \rightarrow 1} \text{GWC}_{\alpha}(\hat{\mu}, \hat{\nu}, \hat{\mu'}, \hat{\nu'}) &= \text{GW}(\hat{\mu}, \hat{\nu})
% \end{align*}
These basic properties ensure that our new divergence is well-posed. However, the most intriguing question is what invariances a convex combination of GW and COOT exhibit? Intuitively, we expect that AGW inherits the common invariants of both. Formally, we first introduce a necessary notion of weak invariance below. 
\begin{definition}
    We call $D = \inf_{\pi \in \Pi} F(\pi, \X, \Y)$, where $\X, \Y$ are input data and $\Pi$ is a set of feasible couplings, an OT-based divergence. Then $D$ is weakly invariant to translation if for every $a, b \in \mathbb R$, we have $\inf_{\pi \in \Pi} F(\pi, \X, \Y) = C + \inf_{\pi \in \Pi} F(\pi, \X+a, \Y+b)$, for some constant $C$ depending on $a, b, \X, \Y$ and $\Pi$. 
\end{definition}
Here, we denote the translation of $\X$ as $\X + a$, whose elements are of the form $\X_{i, j} + a$. In other words, an OT-based divergence is weakly invariant to translation if only the optimal transport plan is preserved under translation, but not necessarily the divergence itself. We now state our result regarding the weak invariance of AGW below.
\begin{theorem}[Invariant property]
\label{prop:invariant}
    \text{ }
    \begin{enumerate}
        \item $\AGW$ is weakly invariant to translation.

        \item Given two input matrices $\X$ and $\Y$, if $\mu = \nu$ and $\Y$ is obtained by permuting columns (features) of $\X$ via the permutation $\sigma_c$ (so $\nu' = (\sigma_c)_{\#} \mu'$), then $\text{$\AGW$}(\X, \Y) = 0$.
    \end{enumerate}
\end{theorem}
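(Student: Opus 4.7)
The two parts decouple naturally. For part 1 I would show that a global translation shifts each of the GW and COOT loss terms by a constant that does not depend on the transport plans; for part 2 I would exhibit an explicit pair of feasible couplings on samples and features that simultaneously drives both loss terms to zero.

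\textbf{Part 1 (weak invariance to translation).} For any translation-invariant choice of $d_X, d_Y$ (e.g.\ Euclidean, as is standard in the GW literature), the GW integrand $(d_X(x_i+a,x_k+a) - d_Y(y_j+b,y_l+b))^2$ equals $(d_X(x_i,x_k) - d_Y(y_j,y_l))^2$, so $\mathcal L_{\text{GW}}$ is literally unchanged. For the COOT loss I would expand
\begin{equation*}
\bigl((x_{ik}+a) - (y_{jl}+b)\bigr)^2 = (x_{ik} - y_{jl})^2 + 2(a-b)(x_{ik} - y_{jl}) + (a-b)^2
\end{equation*}
and sum against $\GGs_{ij}\GGv_{kl}$. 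The constant piece contributes $(a-b)^2$ since $\GGs$ and $\GGv$ each have unit total mass, while the linear piece factors via the marginal constraints into $2(a-b)\bigl(\sum_{i,k} x_{ik}\mu_i\mu'_k - \sum_{j,l} y_{jl}\nu_j\nu'_l\bigr)$, which depends only on $\X, \Y$ and the prescribed marginals. Consequently the AGW objective shifts by a plan-independent constant, so the argmin set is preserved and the weak-invariance condition of the definition is satisfied.

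\textbf{Part 2 (column permutation).} I would exhibit an explicit zero-loss feasible pair. Take the diagonal sample coupling $\GGs = \mathrm{diag}(\mu)$, which is admissible because $\mu = \nu$, and the permutation feature coupling $\GGv_{kl} = \mu'_k\,\mathbf{1}[\sigma_c(k)=l]$, admissible because $(\sigma_c)_{\#}\mu' = \nu'$. The hypothesis that $\Y$ is obtained by permuting columns of $\X$ via $\sigma_c$ yields $y_{j,\sigma_c(k)} = x_{jk}$ entry-wise, and therefore $d_Y(y_i,y_k) = d_X(x_i,x_k)$ for any coordinate-symmetric common choice of $d_X = d_Y$ (e.g.\ Euclidean). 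Substituting these identities, both quadruple sums collapse: $\mathcal L_{\text{GW}}(\GGs) = \sum_{i,k}(d_X(x_i,x_k) - d_Y(y_i,y_k))^2 \mu_i\mu_k = 0$ and $\mathcal L_{\COOT}(\GGs,\GGv) = \sum_{i,k}(x_{ik} - y_{i,\sigma_c(k)})^2 \mu_i\mu'_k = 0$. Since the AGW objective is nonnegative, attainment of $0$ at this feasible pair forces $\AGW(\X,\Y) = 0$.

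\textbf{Expected obstacle.} Neither part is technically deep; the main care is bookkeeping. The delicate step in part 1 is the simultaneous use of both marginals of $\GGs$ and both marginals of $\GGv$ to factor the cross term into a coupling-independent quantity. In part 2 the only tacit hypothesis is that $d_X, d_Y$ are invariant under reordering of coordinates, which I would make explicit when stating the theorem.
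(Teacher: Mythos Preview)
Your proposal is correct and follows essentially the same approach as the paper: for part~1, the GW term is literally translation-invariant and the COOT term shifts by a coupling-independent constant obtained by expanding the square and factoring the linear piece through the marginals; for part~2, one exhibits the diagonal sample coupling together with the permutation feature coupling and checks that both loss terms vanish. Your treatment is in fact slightly more careful than the paper's (you handle simultaneous translations $a,b$ rather than a single shift, and you make explicit the tacit coordinate-symmetry assumption on $d_X,d_Y$ needed for the column permutation to be an isometry).
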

It is well known that, in Euclidean space, there are only three types of isometry: translation, rotation, and reflection (see \cite{Konrad}, for example). $\AGW$ inherits the weak invariant to translation from COOT. In practice, we would argue that the ability to preserve the optimal plan under translation is much more important than preserving the distance itself. In other words, the translation only shifts the minimum but has no impact on the optimization procedure, meaning that the minimizer remains unchanged. Similar to GW distance, AGW also covers basic isometries, for example feature swap. Logically, AGW covers much fewer isometries than GW distance, since AGW only has at most as finitely many isometries as COOT, whereas GW distance has infinitely many isometries. Given the superior performance of AGW over GW and COOT in the experiments, we conjecture that there may be other relevant isometries. We leave a more detailed understanding of the isometries induced by AGW to future work.

\subsection{Related work}

In optimal transport, a common approach to incorporate prior knowledge is integrating it into the cost function. For example, when all classification labels are available, \cite{Melis20} proposed the Optimal Transport Dataset Distance to compare two datasets by adding the Wasserstein distance between the histograms associated with the labels, to the transport cost. However, this approach is only applicable when the data lives in the same ground spaces. Given some known matching between samples across domains, \cite{Gu2022} used the Keypoint-Guided Optimal Transport to capture more efficiently the discrepancy between two datasets, by attaching a mask matrix containing alignment information to the transport plan via the element-wise multiplication. Another line of work is the Fused Gromov-Wasserstein (FGW) distance \cite{vay2019fgw} used for comparing structured objects at both structural and feature levels. The objective function of FGW is a convex combination between the GW term defined based on the intra-domain structural information, and the Wasserstein term that takes into account the information about the features associated to the corresponding structural elements.

Despite the resemblance to FGW, $\AGW$ serves a very different purpose and covers different use cases. First and foremost, AGW is a divergence that tackles cross-domain applications that are inaccessible to FGW. Second, FGW is mostly used for structured objects endowed with additional feature information, while AGW can be used on empirical measures defined for any set of objects. Finally, the feature space in the case of FGW is associated with the sample space, whereas in $\AGW$ the two spaces are independent. We would also like to stress that the notion of feature space in FGW \textit{does not} have the same meaning as the one in $\AGW$ (and COOT). Each element of the former is associated with a point in the sample space; for example, each node of the graph may be colored by a specific color (feature). By contrast, the feature information in AGW is precisely the coordinates of a point, in addition to its representation in the original and dissimilarity-induced spaces.

\renewcommand{\arraystretch}{1.15}
\section{Experimental evaluations}
In this section, we present the empirical evaluations of the proposed divergence for the single-cell multi-omics alignment task and a heterogeneous domain adaptation task in  ML. Overall, our experiments answer the following questions:
\begin{enumerate}
    \item[Q1.] Does tightening the invariances of GW improve the performance in downstream tasks where it was previously used?
    \item[Q2.] Does prior knowledge introduced in GW help in obtaining better cross-domain matchings?
\end{enumerate}
We particularly focus on the emerging OT-driven single-cell alignment task for two reasons. First, GW imposed itself as a state-of-the-art method for this task \cite{Pamona, Demetci2022,UniPort}, and thus it is important to see whether we can improve upon it using AGW. Second, several single-cell benchmark datasets provide ground-truth matchings on the feature level in addition to the common sample alignments. This information allows us to assess the importance of guiding cross-domain matching with partial or full knowledge of the relationships between the features in the two domains. 

In the following experiments, we also consider the entropic regularization on both sample and feature couplings optimized by AGW. For all experiments, we detail our experimental setup, including the hyperparameter tuning procedure, as well as the runtime of the algorithms, in the Appendix. 

\subsection{Integrating single-cell multi-omics datasets}
Integration of data from different single-cell sequencing experiments is an important task in biology for which OT has proven to be useful\cite{Pamona, UniPort, Demetci2020}. Single-cell experiments measure various genomic features and events at the individual cell resolution. Jointly studying these can give scientists insight into the mechanisms regulating cells \cite{Pamona, Demetci2020, bindSC}. However, experimentally combining multiple types of measurements for the same cell is challenging for most combinations. To study the relationships and interactions between different aspects of the genome, scientists rely on computational integration of multi-modal data taken on different but related cells (e.g., by cell type or tissue). %Previous studies have [cite] applied Gromov-Wasserstein optimal transport to this multi-modal data integration task and demonstrated that it performs competitively with the state-of-the-art single-cell alignment algorithms. 

\paragraph{Single-cell alignment} Below, we follow \cite{Demetci2020} and align samples (i.e., cells) of simulated and real-world single-cell datasets from different measurement modalities in order to perform the integration. For all datasets, we have ground-truth information on cell-cell alignments, which we only use for benchmarking. We demonstrate in Table \ref{table:scCells} that our proposed framework yields higher quality cell alignments (with lower alignment error) compared to both GW and COOT.

\begin{table}[H]
\caption{\label{table:scCells}\textbf{Single-cell alignment error},  as quantified by the average `fraction of samples closer than true match' (FOSCTTM) metric (lower values are better, metric defined in the Appendix). Cell alignment performance of AGW is compared against the alignments by GW (SCOT), COOT, and bindSC, which performs bi-order canonical correlation analysis for alignment (detailed in the next section). It requires prior information on feature relationships, which we do not have for the first three simulations (thus the N/A).}
\begin{center}
\resizebox{.95\linewidth}{!}{
\begin{tabular}{@{}l|ccccccc@{}}

%\toprule
\textbf{}          & \textbf{Sim 1}           & \textbf{Sim 2}  & \textbf{Sim 3}  & \textbf{\begin{tabular}[c]{@{}c@{}}Splatter Simulation\\ (Synthetic RNA-seq)\end{tabular}} & \textbf{scGEM} & \textbf{SNARE-seq} & \textbf{CITE-seq} \\ \hline
\textbf{AGW}      & \textbf{0.073}           & \textbf{0.0041} & \textbf{0.0082} & \textbf{0.0}                                                                               & \textbf{0.183} & \textbf{0.136}     & \textbf{0.091}    \\
\textbf{GW (SCOT)} & 0.0866                  & 0.0216          & \underline{0.0084}          & \underline{7.1 e-5}                                                                                    & \underline{0.198}          & \underline{0.150}              & \underline{0.131}             \\
\textbf{COOT}      & \underline{0.0752} & \textbf{0.0041} & 0.0088          & \textbf{0.0}                                                                               & 0.206          & 0.153              & 0.132             \\
\textbf{bindSC}    & N/A                      & N/A             & N/A             & 3.8 e-4                                                                                    & 0.204          & 0.242              & 0.144             \\
% \textbf{UnionCom}  & 0.083                    & 0.0157          & 0.152           & 0.038                                                                                      & 0.2096         & 0.265              & 0.169             \\
% \textbf{MMD-MA}    & 0.1244                   & 0.0327          & 0.01246         & 0.112                                                                                      & 0.2014         & 0.15               & 0.113             \\
% \textbf{Pamona}    & 0.078                    & 0.011           & \textbf{0.0082} & 0.00041                                                                                    & 0.204          & 0.2217             & 0.116             \\ 
%\bottomrule
\end{tabular}}
\end{center}
\end{table}

\paragraph{Alignment of genomic features} AGW augements GW formulation with a feature coupling matrix. Therefore, we also jointly align features and investigate AGW's use for revealing potential biological relationships. All current single-cell alignment methods can only align samples (i.e., cells). A nuanced exception is bindSC \cite{bindSC}, which performs bi-order canonical correlation analysis to integrate cells. As a result, it internally generates a feature correlation matrix that users can extract. Among all the real-world datasets in Table \ref{table:scCells}, CITE-seq \cite{CITEseq} is the only one with ground-truth information on feature correspondences. This dataset has paired single-cell measurements on the abundance levels of 25 antibodies, as well as activity (i.e., ``expression'') levels of genes, including the genes that encode these 25 antibodies. So, we first present unsupervised feature alignment results on the CITE-seq dataset. For completion, we also report the biological relevance of our feature alignments on SNARE-seq \cite{SNAREseq} and scGEM \cite{scGEM} datasets in Appendix Section 4. However, note that these datasets (unlike CITE-seq) do not have clear ground-truth feature correspondences.  We compare our feature alignments with bindSC and COOT in Figure \ref{fig:cite}. The entries in the feature alignment matrices are arranged such that the ``ground-truth'' correspondences lie in the diagonal, marked by green squares. While AGW correctly assigns 19 out of 25 antibodies to their encoding genes with the highest alignment probability, this number is 15 for COOT and 13 for bindSC (which yields correlation coefficients instead of alignment probabilities). Additionally, the OT methods yield more sparse alignments thanks to the ``least effort'' requirement in their formulation.
\begin{figure}[!t]
\centering
\includegraphics[width=1.0\linewidth]{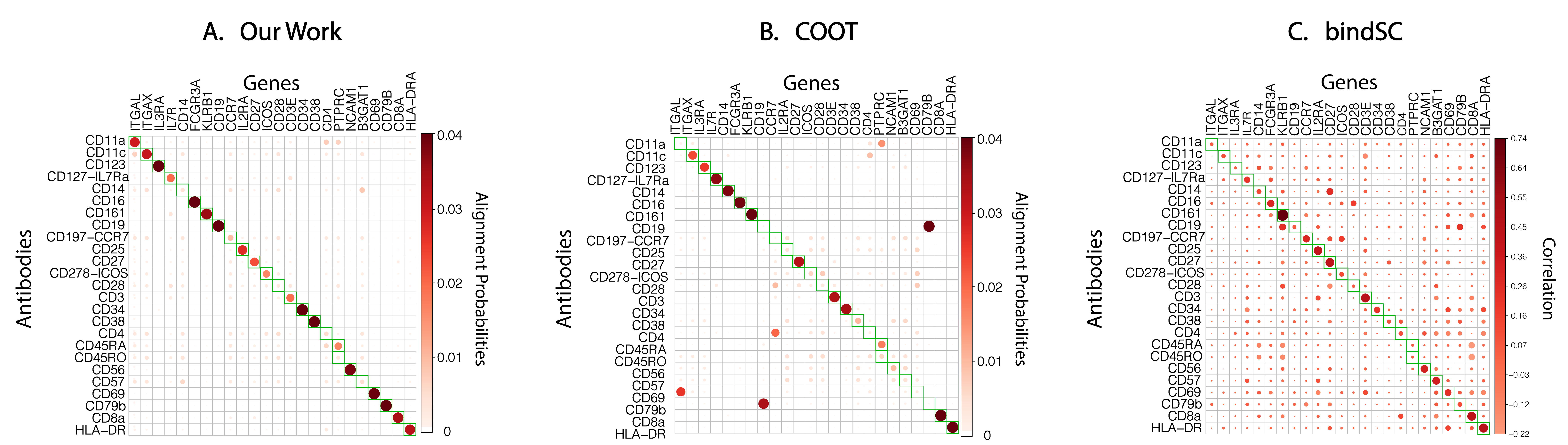}
\caption{\label{fig:cite} Feature alignment matrices in the CITE-seq dataset. AGW provides more qualitative results than previously proposed methods.}
\end{figure}
\paragraph{Importance of prior knowledge}
Finally, we demonstrate the advantage of providing prior information by aligning a multi-species gene expression dataset, which contains measurements from the adult mouse prefrontal cortex\cite{mouse} and pallium of bearded lizard \cite{lizard}. Since measurements come from two different species, the feature space (i.e. genes) differs, and there is also no 1-1 correspondence between the samples (i.e. cells). However, there is a shared subset within the features, i.e., paralogous genes, which are genes that descend from a common ancestor of the two species and have similar biological functions. We also have some domain knowledge about the cells that belong to similar cell types across the two species. Thus, we expect AGW to recover these relationships in both the sample and the feature alignment matrices. 

\setlength{\columnsep}{10pt}%
\setlength{\intextsep}{0pt}
\begin{wrapfigure}[15]{r}{.42\linewidth}
\centering
\includegraphics[width=\linewidth]{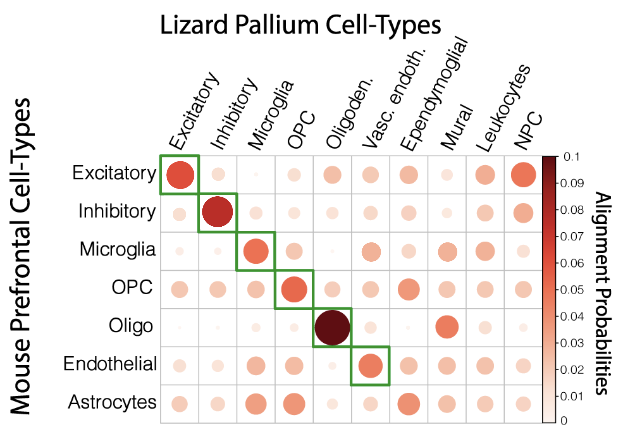}
\caption{\textbf{Cell-type alignment results on cross-species dataset.}}
\label{fig:crossAlignments}
\end{wrapfigure}

Figure \ref{fig:crossAlignments} visualizes the cell-type alignment probabilities yielded by AGW when full supervision is provided on the $10,816$ paralogous genes. The green boxes indicate alignment between similar types of cells. This matrix is obtained by averaging the sample alignment matrix (i.e., cell-cell alignments) into cell-type groups. Figure \ref{fig:crossAlignments} demonstrates that AGW yielded biologically plausible alignments, as all the six cell types that have a natural match across the two species are correctly matched. We additionally show in Tables \ref{table:xSp-supervisedCell} and \ref{table:xSp-supervisedGene} that providing supervision on one level of alignment (e.g., features) improves the alignment quality on the other level (e.g., samples). Supervision scheme is detailed in the ``Experimental Set-up'' section of the Appendix.

\begin{table}[H]
\caption{\label{table:xSp-supervisedCell} AGW's sample (i.e. cell) alignment performance with increasing supervision on feature (i.e. gene) alignments}
\begin{center}
\resizebox{.95\linewidth}{!}{
\begin{tabular}{@{}l|cccccc|l@{}}
\toprule
\textbf{Supervision on orthologous genes}    & 0\%   & 20\%  & 40\%  & 60\% & 80\% & 100\% & \textbf{bindSC} \\ \midrule
\textbf{\% Accuracy of Cell-type alignment} & 66.67 & 83.34 & 83.34 & 100  & 100  & 100   & 66.67           \\ \bottomrule
\end{tabular}
}
\end{center}
\end{table}

\begin{table}[H]
\caption{\label{table:xSp-supervisedGene}AGW's feature (i.e. paralogous gene) alignment performance with increasing supervision on sample (i.e. cell) alignments. Note that bindSC takes in paralogous gene pair information at initialization (as a feature alignment prior), so we are unable to include it as a valid baseline here.}
\begin{center}
\resizebox{.95\linewidth}{!}{
\begin{tabular}{@{}l|cccccc|c@{}}
\toprule
\textbf{Supervision on cell-type alignment} & 0\%   & 20\%  & 40\%  & 60\%  & 80\%  & 100\% & \textbf{bindSC} \\ \midrule
\textbf{\% Accuracy of gene alignment}      & 73.53 & 82.92 & 87.55 & 91.32 & 93.82 & 93.82 & N/A             \\ \bottomrule
\end{tabular}
}
\end{center}
\end{table}

\subsection{Heterogeneous domain adaptation}
We now demonstrate the generalizability of our approach and turn our attention to an ML task, heterogeneous domain adaptation, where COOT and GW were previously successfully used. Domain adaptation (DA) refers to the problem in which a classifier learned on one domain (called \textit{source}) can generalise well to the other one (called \textit{target}). Here, we illustrate an application of AGW in unsupervised and semi-supervised heterogeneous DA (HDA), where the samples in source and target domains live in different spaces, and we only have access to as few as zero labeled target samples.

\paragraph{Setup} We follow the evaluation setup from \cite{COOT}: AGW, GW, and COOT are evaluated on source-target pairs from the Caltech-Office dataset \cite{Saenko10}. We consider all pairs between the three domains: 
Amazon (A), Caltech-$256$ (C), and Webcam (W), whose images are embeddings extracted from the second last layer in the GoogleNet \cite{Szegedy15} (vectors in $\mathbb R^{4096}$) and CaffeNet \cite{Jia14} (vectors in $\mathbb R^{1024}$) neural network architectures. In the semi-supervised setting, we incorporate the prior knowledge of the target labels by adding an additional cost matrix to the training of sample coupling, so that a source sample will be penalized if it transfers mass to the target samples in the different classes. Once the sample coupling $\GGs$ is learned, we obtain the final prediction using label propagation:  
$\widehat{y}_t = \arg\max_k L_{k\cdot}$
where $L = D_s \GGs$ and $D_s$ denotes one-hot encodings of the source labels $y_s$. The interpolation and entropic regularization hyperparameters are tuned, using accuracy as the evaluation metric.

\paragraph{Results} Table 4 demonstrates each method's performance averaged across ten runs, after hyperparameter tuning with the same grid of values for equivalent hyperparameters (e.g. entropic regularization, details in Appendix). We consider an unsupervised case, and a semi-supervised case with 3 samples used for supervision. Table 4 shows that AGW tends to outperform both GW and COOT, which supports our claim about its capacity to properly adjust the invariance to the datasets at hand. %We also note that the naive choice of $\alpha=0.5$ already provides a strong baseline that improves 

\begin{table}[h]
\caption{\label{tab:hda} \textbf{Heterogeneous domain adaptation results}. Best results are bolded. In the ``AGW (Best $\alpha$)'' column, the $\alpha$ values used are: $0.6, 0.9, 0.7, 0.9, 0.3, 0.8, 0.7, 0.2, 0.6$ top to bottom for the unsupervised setting, and $0.2, 0.1, 0.2, 0.7, 0.2, 0.9, 0.8, 0.9, 0.4$ for the semi-supervised setting.}
\begin{center}
\resizebox{.98\linewidth}{!}{
\begin{tabular}{@{}lcccc|cccc@{}}
\toprule
                           & \multicolumn{4}{c|}{\textbf{Unsupervised}}                                                                                                                                          & \multicolumn{4}{c}{\textbf{Semi-supervised (t=3)}}                                                                                                                                                     \\ \midrule
                           & \textbf{COOT}   & \textbf{GW}     & \textbf{\begin{tabular}[c]{@{}c@{}}AGW \\ ($\alpha$=0.5)\end{tabular}} & \textbf{\begin{tabular}[c]{@{}c@{}}AGW\\ (Best $\alpha$)\end{tabular}} & \textbf{COOT}  & \textbf{GW}                   & \textbf{\begin{tabular}[c]{@{}c@{}}AGW \\ ($\alpha$=0.5)\end{tabular}} & \textbf{\begin{tabular}[c]{@{}c@{}}AGW\\ (Best $\alpha$)\end{tabular}} \\ \midrule
\textbf{A $\rightarrow$ A} & 50.3 $\pm$ 15.9 & 86.2 $\pm$ 2.3  & 90.5 $\pm$ 2.4                                                   & \textbf{93.1 $\pm$ 1.6}                                 & 91.1 $\pm$ 2.0 & 93.2 $\pm$ 0.9                & 93.8 $\pm$ 1.3             & \textbf{96.0 $\pm$ 0.8}                                \\
\textbf{A $\rightarrow$ C} & 35.0 $\pm$ 6.4  & 64.1 $\pm$ 6.2  & 68.2 $\pm$ 7.4                                                   & \textbf{68.3 $\pm$ 14.1}                                & 59.7 $\pm$ 3.6 & 92.8 $\pm$ 2.1                & 90.7 $\pm$ 1.9                                                         & \textbf{93.5 $\pm$ 1.8}                                 \\
\textbf{A $\rightarrow$ W} & 39.8 $\pm$ 14.5 & 79.6 $\pm$ 11.1 & 75.5 $\pm$ 3.1                                                         & \textbf{79.8 $\pm$ 3.5}                                 & 72.6 $\pm$ 4.4 & 91.6 $\pm$ 1.8                & 91.4 $\pm$ 1.1                                                         & \textbf{93.8 $\pm$ 0.7}                                 \\
\textbf{C $\rightarrow$ A} & 40.8 $\pm$ 15.8 & 53.0 $\pm$ 13.2 & 48.5 $\pm$ 6.9                                                         & \textbf{55.4 $\pm$ 7.1}                                 & 83.1 $\pm$ 5.1 & 81.2 $\pm$ 1.2                & { 84.3 $\pm$ 1.6}                                                   & \textbf{85.6 $\pm$ 1.2}                                 \\
\textbf{C $\rightarrow$ C} & 33.4 $\pm$ 10.7 & \textbf{81.9 $\pm$ 30.5} & 68.5 $\pm$ 5.5                                                         & 76.4 $\pm$ 5.6                                          & 59.3 $\pm$ 8.4 & 85.3 $\pm$ 2.8                & 83.4 $\pm$ 2.3                                                         & \textbf{86.5 $\pm$ 2.1}                                 \\
\textbf{C $\rightarrow$ W} & 37.5 $\pm$ 10.4 & 53.5 $\pm$ 15.9 & { 56.6 $\pm$ 7.6}                                                   & \textbf{57.7 $\pm$ 14.3}                                & 64.6 $\pm$ 6.2 & 79.7 $\pm$ 2.5                & 81.3 $\pm$ 4.3                                                         & \textbf{83.2 $\pm$ 2.4}                                 \\
\textbf{W $\rightarrow$ A} & 44.3 $\pm$ 14.0 & 50.4 $\pm$ 22.1 & { 52.1 $\pm$ 3.8}                                                   & \textbf{60.1 $\pm$ 9.1}                                 & 94.3 $\pm$ 2.2 & 93.4 $\pm$ 5.2                & 92.3 $\pm$ 1.5                                                         & \textbf{97.1 $\pm$ 0.8}                                 \\
\textbf{W $\rightarrow$ C} & 27.4 $\pm$ 10.2 & 54.3 $\pm$ 14.7 & 53.6 $\pm$ 17.3                                                        & \textbf{60.9 $\pm$ 13.3}                                & 55.0 $\pm$ 7.1 & 90.9 $\pm$ 3.5                & 90.9 $\pm$ 2.0                                                         & \textbf{94.7 $\pm$ 1.1}                                 \\
\textbf{W $\rightarrow$ W} & 57.9 $\pm$ 13.4 & 92.5 $\pm$ 2.6  & 90.3 $\pm$ 5.4                                                         & \textbf{97.2 $\pm$ 0.9}                                 & 87.4 $\pm$ 4.4 & 97.4 $\pm$ 2.6\ & { 98.5 $\pm$ 0.7}                                                   & \textbf{98.7 $\pm$ 0.5}                                 \\
% \textbf{Mean}              &                 &                 &                                                                        &                                                                        &                &                               &                                                                        &                                                  
\hline
\end{tabular}
}

\end{center}
\end{table}

\section{Discussion and conclusion}

We present a new OT-based distance for incomparable spaces called augmented Gromov-Wasserstein (AGW), which relies on the GW distance and CO-Optimal transport. This novel metric allows us to narrow down the choices of isometries induced by GW distance, while better exploiting the prior knowledge or the input data. We study its basic properties and empirically show that such restriction results in better performance for single-cell multi-omic alignment tasks and transfer learning. Future work will focus on refining the theoretical analysis of the isometries induced by AGW distance, which may shed light on understanding why they are useful and relevant in various learning tasks. It would also be interesting to extend this framework to the unbalanced, and/or continuous setting and other tasks where feature supervision proposed by domain experts may be incorporated in OT framework. 

\paragraph{Limitations} One limitation of AGW is the inherent computational burden of the GW component. Possible solutions can be considering low-rank coupling and cost matrix \cite{Scetbon22}, or using the divide and conquer strategy \cite{Chowdhury21}, which allows us to scale the GW distance up to a million points. 
\newpage
\section{Appendix}\label{appendix}
\subsection{Proofs}

\begin{proof}[Proof of \cref{prop:basic_prop}]
The proof of this proposition can be adapted directly from \cite{vay2019fgw}. For self-contained purpose, we give the proof here. Denote
\begin{itemize}
    \item $(\GGs_{\alpha}, \GGv_{\alpha})$ the optimal sample and feature couplings for $\text{AGW}_{\alpha}(\X, \Y)$.

    \item $(\GGs_0, \GGv_0)$ the optimal sample and feature couplings for $\text{COOT}(\X, \Y)$.

    \item $\GGs_1$ the optimal sample coupling for $\text{GW}(\X, \Y)$.
\end{itemize}
Due to the suboptimality of $\GGs_{\alpha}$ for GW and $(\GGs_1, \GGv_0)$ for AGW, we have
\begin{align*}
    \alpha \langle L(\D_X, \D_Y) \otimes \GGs_1, \GGv_1 \rangle &\leq \alpha \langle L(\D_X, \D_Y) \otimes \GGs_{\alpha}, \GGs_{\alpha} \rangle + (1 - \alpha) \langle L(\X, \Y) \otimes \GGv_{\alpha}, \GGs_{\alpha} \rangle \\
    &\leq \alpha \langle L(\D_X, \D_Y) \otimes \GGs_1, \GGs_1 \rangle + (1 - \alpha) \langle L(\X, \Y) \otimes \GGv_0, \GGs_1 \rangle,
\end{align*}
or equivalently
\begin{align}
    \alpha \text{GW}(\X, \Y) \leq \text{AGW}_{\alpha}(\X, \Y) \leq \alpha \text{GW}(\X, \Y) + (1 - \alpha) \langle L(\X, \Y) \otimes \GGv_0, \GGs_1 \rangle.
\end{align}
Similarly, we have
\begin{align}
    (1 - \alpha) \text{COOT}(\X, \Y) \leq \text{AGW}_{\alpha}(\X, \Y) \leq (1 - \alpha) \text{COOT}(\X, \Y) + \alpha \langle L(\D_X, \D_Y) \otimes \GGs_0, \GGs_0 \rangle.
\end{align}
The interpolation property then follows by the sandwich theorem.

Regarding the relaxed triangle inequality, given three triples $(\X, \mu_{sx}, \mu_{fx}), (\Y, \mu_{sy}, \mu_{fy})$ and $(\Z, \mu_{sz}, \mu_{fz})$, let $(\pi^{XY}, \gamma^{XY}), (\pi^{YZ}, \gamma^{YZ})$ and $(\pi^{XZ}, \gamma^{XZ})$ be solutions of the problems $\text{AGW}_{\alpha}(\X, \Y), \text{AGW}_{\alpha}(\Y, \Z)$ and $\text{AGW}_{\alpha}(\X, \Z)$, respectively. Denote $P = \pi^{XY} \text{diag}\left( \frac{1}{\mu_{sy}} \right) \pi^{YZ}$ and $Q = \gamma^{XY} \text{diag}\left( \frac{1}{\mu_{fy}} \right) \gamma^{YZ}$. Then, it is not difficult to see that $P \in \Pi(\mu_{sx}, \mu_{sz})$ and $Q \in \Pi(\mu_{fx}, \mu_{fz})$. The suboptimality of $(P,Q)$ implies that 
\begin{align*}
    &\frac{\text{AGW}_{\alpha}(\X, \Z)}{2} \\
    &\leq \alpha \sum_{i,j,k,l} \frac{|\D_X(i,j) - \D_Z(k,l)|^2}{2} P_{i,k} P_{j, l} + (1 - \alpha) \sum_{i,j,k,l} \frac{|\X_{i,j} - \Z_{k,l}|^2}{2} P_{i,k} Q_{j,l} \\
    &= \alpha \sum_{i,j,k,l} \frac{|\D_X(i,j) - \D_Z(k,l)|^2}{2} \left( \sum_e \frac{\pi^{XY}_{i,e} \pi^{YZ}_{e,k}}{(\mu_{sy})_e} \right) \left(\sum_o \frac{\pi^{XY}_{j,o} \pi^{YZ}_{o,l}}{(\mu_{sy})_o} \right) \\
    &+ (1 - \alpha) \sum_{i,j,k,l} \frac{|\X_{i,j} - \Z_{k,l}|^2}{2} \left(\sum_e \frac{\pi^{XY}_{i,e} \pi^{YZ}_{e,k}}{(\mu_{sy})_e} \right) \left( \sum_o \frac{\gamma^{XY}_{j,o} \gamma^{YZ}_{o,l}}{(\mu_{fy})_o} \right) \\
    &\leq \alpha \sum_{i,j,k,l,e,o} |\D_X(i,j) - \D_Y(e,o)|^2 \frac{\pi^{XY}_{i,e} \pi^{YZ}_{e,k}}{(\mu_{sy})_e} \frac{\pi^{XY}_{j,o} \pi^{YZ}_{o,l}}{(\mu_{sy})_o} + (1 - \alpha) \sum_{i,j,k,l,e,o} |\X_{i,j} - \Y_{e,o}|^2 \frac{\pi^{XY}_{i,e} \pi^{YZ}_{e,k}}{(\mu_{sy})_e} \frac{\gamma^{XY}_{j,o} \gamma^{YZ}_{o,l}}{(\mu_{fy})_o} \\
    &+ \alpha \sum_{i,j,k,l,e,o} |\D_Y(e, o) - \D_Z(k,l)|^2 \frac{\pi^{XY}_{i,e} \pi^{YZ}_{e,k}}{(\mu_{sy})_e} \frac{\pi^{XY}_{j,o} \pi^{YZ}_{o,l}}{(\mu_{sy})_o} + (1 - \alpha) \sum_{i,j,k,l,e,o} |\Y_{e, o} - \Z_{k,l}|^2 \frac{\pi^{XY}_{i,e} \pi^{YZ}_{e,k}}{(\mu_{sy})_e} \frac{\gamma^{XY}_{j,o} \gamma^{YZ}_{o,l}}{(\mu_{fy})_o} \\
    &= \alpha \sum_{i,j,e,o} |\D_X(i,j) - \D_Y(e,o)|^2 \pi^{XY}_{i,e} \pi^{XY}_{j,o} + (1 - \alpha) \sum_{i,j,e,o} |\X_{i,j} - \Y_{e,o}|^2 \pi^{XY}_{i,e} \gamma^{XY}_{j,o} \\
    &+ \alpha \sum_{k, l, e, o} |\D_Y(e, o) - \D_Z(k,l)|^2 \pi^{YZ}_{e,k} \pi^{YZ}_{o,l} + (1 - \alpha) \sum_{k,l,e,o} |\Y_{e, o} - \Z_{k,l}|^2 \pi^{YZ}_{e,k} \gamma^{YZ}_{o,l} \\
    &= \text{AGW}_{\alpha}(\X, \Y) + \text{AGW}_{\alpha}(\Y, \Z).
\end{align*}
where the second inequality follows from the inequality: $(x + y)^2 \leq 2(x^2 + y^2)$.
\end{proof}

\begin{corollary}
\label{prop:coot_invariant}
    COOT is weakly invariant to translation.
\end{corollary}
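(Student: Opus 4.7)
The plan is to show directly that translating $\X \to \X+a$ and $\Y \to \Y+b$ shifts the COOT objective by a quantity that depends on $a, b, \X, \Y$ and the marginals $\mu, \nu, \mu', \nu'$, but \emph{not} on the couplings $(\GGs, \GGv)$. Once this is established, the minimizers of the translated and untranslated problems coincide, which is exactly the definition of weak invariance to translation.

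First, I would expand the pointwise squared cost after translation:
\begin{align*}
(x_{ik} + a - y_{jl} - b)^2 = (x_{ik} - y_{jl})^2 + 2(a-b)(x_{ik} - y_{jl}) + (a-b)^2.
\end{align*}
Substituting into the COOT loss gives three pieces. The first piece is exactly $\mathcal{L}_\COOT(\GGs, \GGv)$ for the untranslated data. The third piece is a constant term times the total mass, which I would simplify using $\sum_{i,j}\GGs_{i,j}=\sum_{k,l}\GGv_{k,l}=1$ to obtain simply $(a-b)^2$.

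The key step is handling the cross term
\begin{align*}
2(a-b)\sum_{i,j,k,l}(x_{ik}-y_{jl})\,\GGs_{i,j}\GGv_{k,l}.
\end{align*}
Here I would invoke the marginal constraints $\GGs \in \Pi(\mu,\nu)$ and $\GGv \in \Pi(\mu',\nu')$: summing $\GGs$ over $j$ gives $\mu_i$, summing $\GGv$ over $l$ gives $\mu'_k$, and symmetrically for $\nu, \nu'$. This collapses the cross term into
\begin{align*}
2(a-b)\Bigl[\sum_{i,k} x_{ik}\,\mu_i \mu'_k - \sum_{j,l} y_{jl}\,\nu_j \nu'_l\Bigr],
\end{align*}
which is independent of $(\GGs, \GGv)$. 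Defining $C(a,b,\X,\Y)$ as the sum of this quantity and $(a-b)^2$, I would then conclude
\begin{align*}
\mathcal{L}_\COOT(\GGs, \GGv;\X+a,\Y+b) = \mathcal{L}_\COOT(\GGs,\GGv;\X,\Y) + C(a,b,\X,\Y),
\end{align*}
from which the statement follows immediately by taking infima over the (translation-independent) feasible set $\Pi(\mu,\nu)\times\Pi(\mu',\nu')$.

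There is no real obstacle here: the result is essentially a one-line consequence of the marginal constraints, and the proof is a short bookkeeping exercise. The only subtlety worth flagging is that weak invariance \emph{does} depend on having fixed marginals — if the marginals were themselves part of the optimization (as in unbalanced OT), the constant $C$ would no longer be well-defined, and the argument would break.
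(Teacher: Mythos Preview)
Your proof is correct and follows essentially the same approach as the paper: expand the squared loss, then use the marginal constraints $\GGs\in\Pi(\mu,\nu)$, $\GGv\in\Pi(\mu',\nu')$ to show the cross term $\sum_{i,j,k,l}(x_{ik}-y_{jl})\GGs_{i,j}\GGv_{k,l}$ collapses to $\mu^T\X\mu' - \nu^T\Y\nu'$, independent of the couplings. The only cosmetic difference is that the paper reduces to a single translation of $\Y$ by $c$ (noting this suffices), whereas you handle both shifts $a,b$ simultaneously---but since only $a-b$ appears, the two are equivalent.
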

\begin{proof}[Proof of \cref{prop:coot_invariant}]
It is enough to show that, for any $c \in \mathbb R$, we have $\text{COOT}(\X, \Y + c) = \text{COOT}(\X, \Y) + C$, for some constant $C$. 
Indeed, given $\GGs \in \Pi(\mu, \nu), \GGv \in \Pi(\mu', \nu')$, for any $c \in \mathbb R$,  
\begin{align}
    \sum_{ijkl} (\X_{ik} - \Y_{jl} - c)^2 \GGs_{ij} \GGv_{kl} &= \sum_{ijkl} (\X_{ik} - \Y_{jl})^2 \GGs_{ij} \GGv_{kl} - 2c \sum_{ijkl} (\X_{ik} - \Y_{jl}) \GGs_{ij} \GGv_{kl} + c^2
\end{align}
Now,
\begin{align}
    \sum_{ijkl} (\X_{ik} - \Y_{jl}) \GGs_{ij} \GGv_{kl} &= \sum_{ijkl} \X_{ik} \GGs_{ij} \GGv_{kl} - \sum_{ijkl} \Y_{jl} \GGs_{ij} \GGv_{kl} \\
    &= \sum_{ik} \X_{ik} \left( \sum_j \GGs_{ij} \right) \left( \sum_l \GGv_{kl} \right) - \sum_{jl} \Y_{jl} \left( \sum_i \GGs_{ij} \right) \left( \sum_k \GGv_{kl} \right) \\
    &= \sum_{ik} \X_{ik} \mu_i \mu'_k - \sum_{jl} \Y_{jl} \nu_j \nu'_l \\
    &= \mu ^T \X \mu' - \nu ^T \Y \nu'.
\end{align}
So,
\begin{align*}
    \text{COOT}(\X, \Y + c) = \text{COOT}(\X, \Y) - 2 c \left( \mu^T \X \mu' - \nu^T \Y \nu' \right) + c^2.
\end{align*}
This implies that COOT is weakly invariant to translation.  
\end{proof}

\begin{proof}[Proof of \cref{prop:invariant}]
Note that the GW term in AGW remains unchanged by translation. By adapting the proof of \cref{prop:coot_invariant}, we obtain
\begin{align*}
    \text{AGW}_{\alpha}(\X, \Y + c) = \text{AGW}_{\alpha}(\X, \Y) - 2 c \left( \mu^T \X \mu' - \nu^T \Y \nu' \right) + c^2.
\end{align*}
This means AGW is weakly invariant to translation.

Note that $\Y = \X Q$, where $Q$ is a permutation matrix corresponding to the permutation $\sigma_c$. Since $\Y$ is obtained by swapping columns of $\X$, we must have that $\text{GW}(\X, \Y) = 0$ and the optimal plan between $\X$ and $\Y$ is $\GGs = \frac{1}{n^2} \text{Id}_n$. Similarly, $\text{COOT}(\X, \Y) = 0$ and $\GGs, \GGv = \frac{1}{n}Q$ are the optimal sample, feature couplings, respectively. In other words, $\langle L(\D_{X},\D_{Y})\otimes \GGs, \GGs \rangle = 0$ and $\langle L(\X,\Y) \otimes \GGv, \GGs \rangle = 0$. We deduce that $\text{AGW}_{\alpha}(\X, \Y) = 0$.
\end{proof}

\subsection{Additional illustrations on the MNIST and USPS handwritten digits}
\begin{figure}[H]
    \centering
    \includegraphics[width=\linewidth]{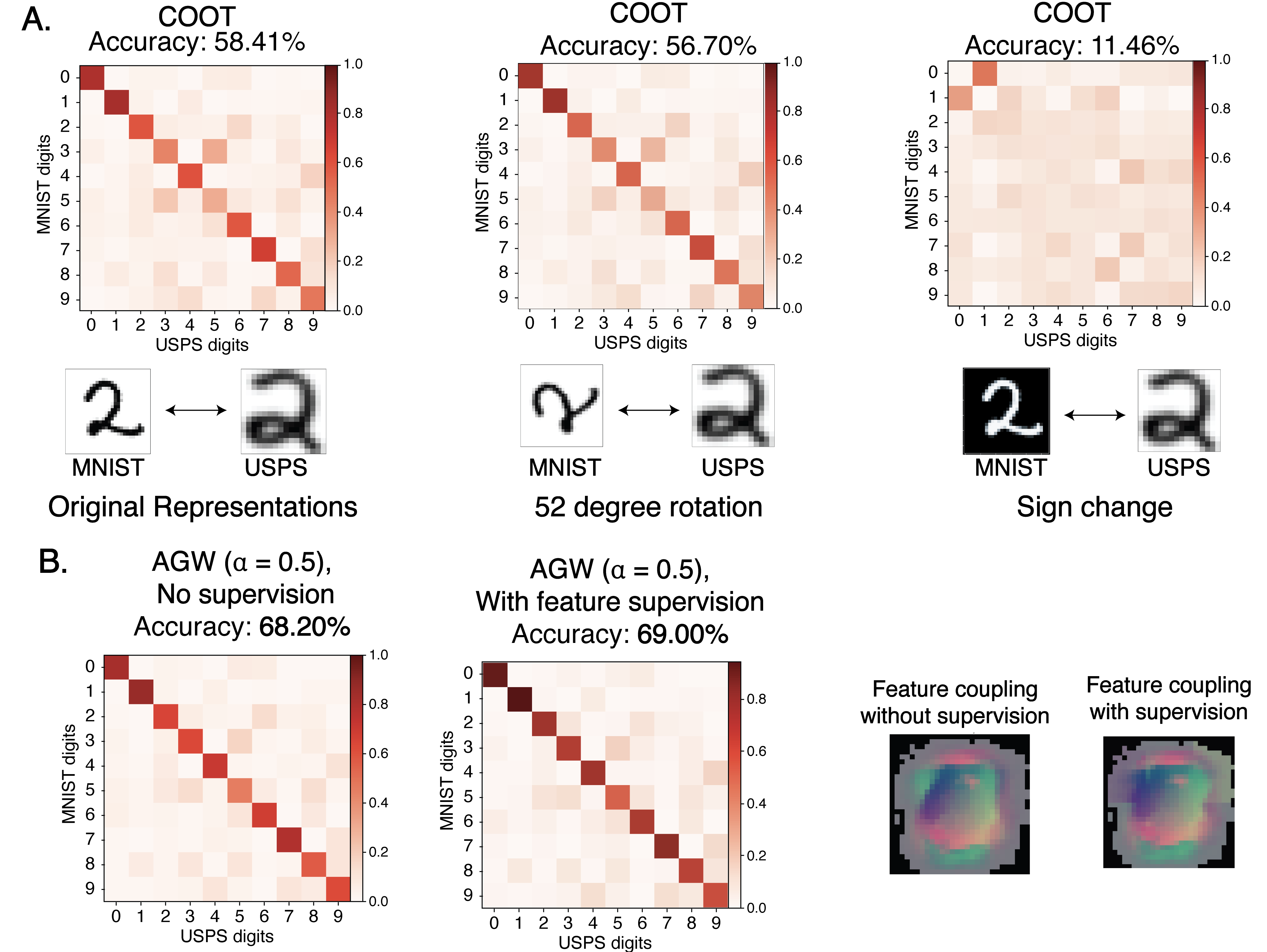}
    \caption{\textbf{A.} Examples of isometric transformations that COOT is not invariant to (e.g. rotation, sign change), \textbf{B.} Further improving MNIST-USPS alignment accuracy of AGW through supervision on features to restrict reflections along the y-axis.\label{fig:SI-MNIST}}
\end{figure}

\section{Experimental Set-Up Details}
\subsection{Code availability and access to scripts for experiments } Code and datasets used in this paper can be found at: \url{https://github.com/pinardemetci/AGW}
\subsection{MNIST Illustrations} We align $1000$ images of hand-written digits from the MNIST dataset with $1000$ images from the USPS dataset. Each dataset is subsampled to contain $100$ instances of each of the $10$ possible digits ($0$ through $9$), using the random seed of $1976$. We set all marginal distributions to uniform, and use cosine distances for GW and AGW. For all methods, we consider both the entropically regularized and also the non-regularized versions. For entropic regularization, we sweep a grid of $\epsilon_1, \epsilon_2 (\textrm{if applicable}) \in [5e-4, 1e-3, 5e-3, 1e-2, 5e-2, 1e-1, 5e-1] $. For AGW, we consider $[0.1, 0.2, 0.3, ..., 0.9]$, and present results with the best performing hyperparameter combination of each method, as measured by the percent accuracy of matching images from the same digit across the two datasets.

\subsection{Single-cell multi-omic alignment experiments}
As a real-world application of AGW, we align single-cell data from different measurement domains. Optimal transport has recently been applied to this problem in computational biology by multiple groups \cite{Demetci2020, Pamona, Demetci2022, UniPort}. To briefly introduce the problem: Biologists are interested in jointly studying multiple genomic (i.e. ``multi-omic'') aspects of cells to determine biologically-relevant patterns in their co-variation. Such studies could reveal how the different molecular aspects of a cell's genome (e.g. its 3D structure, reversible chemical modifications it undergoes, activity levels of its genes etc) interact to regulate the cell's response to its environment. These studies are of interest for both fundamental biology research, as well as drug discovery applications. However, as Liu \textit{et al} describes \cite{liu_et_al:LIPIcs:2019:11040}, it is experimentally difficult to combine multiple measurements on the same cells. Consequently, computational approaches are developed to integrate data obtained from different measurement modalities using biologically relevant cell populations. In this paper, we apply AGW to jointly align both cells and genomic features of single-cell datasets. This is a novel direction in the application of optimal transport (OT) to single-cell multi-omic alignment task, as the existing OT-based algorithms only align cells. 

\paragraph{Datasets} We largely follow the first paper that applied OT to single-cell multi-omic alignment task \cite{Demetci2020} in our experimental set-up and use four simulated datasets and three real-world single-cell multi-omic datasets to benchmark our cell alignment performance. Three of the simulated datasets have been generated by Liu \textit{et al.} \cite{liu_et_al:LIPIcs:2019:11040} by non-linearly projecting 600 samples from a common 2-dimensional space onto different 1000- and 2000- dimensional spaces with 300 samples in each. In the first simulation, the data points in each domain form a bifurcating tree structure that is commonly seen in cell populations undergoing differentiation. The second simulation forms a three dimensional Swiss roll. Lastly, the third simulation forms a circular frustum that resembles what is commonly observed when investigating cell cycle. These datasets have been previously used for benchmarking by other cell-cell alignment methods \cite{liu_et_al:LIPIcs:2019:11040,singh20,cao2020unsupervised, Pamona,Demetci2020}. We refer to these datasets as ``Sim 1'', ``Sim 2'', and ``Sim 3'', respectively. We include a fourth simulated dataset that has been generated by \cite{Demetci2020} using a single-cell RNA-seq data simulation package in R, called Splatter \cite{zappia2017splatter}. We refer to this dataset as ``Synthetic RNA-seq''. This dataset includes a simulated gene expression domain with 50 genes and 5000 cells divided across three cell-types, and another domain created by non-linearly projecting these cells onto a 500-dimensional space. As a result of their generation schemes, all simulated datasets have ground-truth 1-1 cell correspondence information. We use this information solely for benchmarking. We do not have access to ground-truth feature relationships in these datasets, so, we exclude them from feature alignment experiments.

Additionally to the simulated datasets, we include three real-world sequencing datasets in our experiments. To have ground-truth information on cell correspondences for evaluation, we choose three co-assay datasets which have paired measurements on the same individual cells: an scGEM dataset \cite{cheow2016}, a SNARE-seq dataset \cite{SNAREseq}, and a CITE-seq dataset \cite{CITEseq} (these are exceptions to the experimental challenge described above). These first two datasets have been used by existing OT-based single-cell alignment methods \cite{cao2020unsupervised, singh20, Demetci2020, Pamona, Demetci2022}, while the last one was included in the evaluations of a non-OT-based alignment method, bindSC \cite{bindSC} (described in the ``Evaluations'' section below). The scGEM dataset contains measurements on gene expression and DNA methylation states of 177 individual cells from human somatic cell population undergoing conversion to induced pluripotent stem cells (iPSCs) \cite{cheow2016}. We accessed the pre-processed count matrices for this dataset through the following GitHub repository: \url{https://github.com/caokai1073/UnionCom}. The SNARE-seq dataset contains gene expression and chromatin accessibility profiles of 1047 individual cells from a mixed population of four cell lines: H1(human embryonic stem cells), BJ (a fibroblast cell line), K562 (a lymphoblast cell line), and GM12878 (lymphoblastoid cells derived from blood) \cite{SNAREseq}. We access their count matrices on Gene Expression Omnibus platform online, with the accession code GSE126074. Finally, the CITE-seq dataset has gene expression profiles and epitope abundance measurements on 25 antibodies from 30,672 cells from human bone marrow tissue \cite{CITEseq}. The count matrices for this dataset were downloaded from the Seurat website \footnote{\url{https://satij alab.org/seurat/v4.0/weighted_nearest_neighbor_analysis.html}}. We use these three real-world single-cell datasets for both cell-cell (i.e. sample-sample) alignment benchmarking, as well as feature-feature alignment benchmarking. In addition to these three datasets, we include a fourth single-cell datasets, which contains data from the same measurement modality (i.e. gene expression), but from two different species: mouse \cite{mouse} and bearded lizard \cite{lizard}. Our motivation behind including this dataset is to demonstrate the effects of both sample-level (i.e. cell-level) and feature-level supervision on alignment qualities. We refer to this dataset as the ``cross-species dataset'', which contains 4,187 cells from lizard pallium (a brain region) and 6,296 cells from the mouse prefrontal cortex. The two species share a subset of their features: 10,816 paralogous genes. Each also has species-specific genes: 10,184 in the mouse dataset and 1,563 in the lizard dataset. As the data comes from different species there is no 1--1 correspondence between cells. However, the two species contain cells from similar cell types. Unlike the other single-cell dataset, there is a subset of the features (the paralogous genes) that have 1--1 correspondences across the two domains (domains are defined by species in this dataset).  

\paragraph{Baselines and hyperparameter tuning} 
We benchmark AGW's performance on single-cell alignment tasks against three algorithms: (1) COOT \cite{COOT}, (2) SCOT \cite{Demetci2020}, which is a Gromov-Wasserstein OT-based algorithm that uses k-nearest neighbor (kNN) graph distances as intra-domain distance matrices (this choice of distances has been shown to perform better than Euclidean distances, cosine distances by \cite{Demetci2020}), and bindSC \cite{bindSC}. Among these, bindSC is not an OT-based algorithm: It employs bi-order cannonical correlation analysis to perform alignment. We include it as a benchmark as it is the only existing single-cell alignment algorithm that can perform feature alignments (in addition to cell alignments) in most cases.

When methods share similar hyperparameters in their formulation (e.g. entropic regularization constant, $\epsilon$ for methods that employ OT), we use the same hyperparameter grid to perform their tuning. Otherwise, we refer to the publication and the code repository for each method to choose a hyperparameter range. For SCOT, we tune four hyperparameters: $k \in \{20, 30, \dots, 150\}$, the number of neighbors in the cell neighborhood graphs, $\epsilon \in \{5e-4, 3e-4, 1e-4, 7e-3, 5e-3, \dots, 1e-2 \}$, the entropic regularization coefficient for the optimal transport formulation. Similarly, for both COOT and AGW, we sweep $\epsilon_1, \epsilon_2 \in \{5e-4, 3e-4, 1e-4, 7e-3, 5e-3, \dots, 1e-2 \}$ for the coefficients of entropic regularization over the sample and feature alignments. We use the same intra-domain distance matrices in AGW as in SCOT (based on kNN graphs). For all OT-based methods, we perform barycentric projection to complete the alignment. 

For bindSC, we choose the couple coefficient that assigns weight to the initial gene activity matrix $\alpha \in \{0, 0.1, 0.2, \dots 0.9\}$ and the couple coefficient that assigns weight factor to multi-objective function $\lambda \in \{0.1, 0.2, \dots, 0.9\}$. Additionally, we choose the number of canonical vectors for the embedding space $K \in \{3, 4, 5, 10, 30, 32\}$.  For all methods, we report results with the best performing hyperparameter combinations. 

\paragraph{Evaluation Metrics} When evaluating cell alignments, we use a metric previously used by other single-cell multi-omic integration tools \cite{liu_et_al:LIPIcs:2019:11040, singh20,cao2020unsupervised, Demetci2020, Pamona, Demetci2022, bindSC} called ``fraction of samples closer than the true match'' (FOSCTTM). For this metric, we compute the Euclidean distances between a fixed sample point and all the data points in the other domain. Then, we use these distances to compute the fraction of samples that are closer to the fixed sample than its true match, and then average these values for all the samples in both domains. This metric measures alignment error, so the lower values correspond to higher quality alignments.

To assess feature alignment performance, we investigate the accuracy of feature correspondences recovered. We mainly use two real-world datasets for this task - CITE-seq, and the cross-species scRNA-seq datasets (results on SNARE-seq and scGEM datasets are qualitatively evaluated due to the lack of ground-truth information). For the CITE-seq dataset, we expect the feature correspondences to recover the relationship between the 25 antibodies and the genes that encode them. To investigate this, we simultaneously align the cells and features of the two modalities using the 25 antibodies and 25 genes in an unsupervised manner. We compute the percentage of 25 antibodies whose strongest correspondence is their encoding gene.

For the cross-species RNA-seq dataset, we expect alignments between (1) the cell-type annotations common to the mouse and lizard datasets, namely: excitatory neurons, inhibitory neurons, microglia, OPC (Oligodendrocyte precursor cells), oligodendrocytes, and endothelial cells and (2) between the paralogous genes. For this dataset, we generate cell-label matches by averaging the rows and columns of the cell-cell alignment matrix yielded by AGW based on these cell annotation labels. We compute the percentage of these six cell-type groups that match as their strongest correspondence. For feature alignments, we compute the percentage of the 10,816 shared genes that are assigned to their corresponding paralogous gene with their highest alignment probability. For this dataset, we consider providing supervision at increasing levels on both sample and feature alignments. For feature-level supervision, $20\%$ supervision means setting the alignment cost of $\sim 20\%$ of the genes with their paralogous pairs to $0$. For sample-level supervision, $20\%$ supervision corresponds to downscaling the alignment cost of $\sim 20\%$ of the mouse cells from the aforementioned seven cell-types with the $\sim 20\%$ of lizard cells from their corresponding cell-type by $\frac{1}{\textrm{\# lizard cells in the same cell-type}}$. 

\subsection{Heterogeneous domain adaptation experiments}
We evaluate AGW against GW and COOT on source-target pairs from the Caltech-Office dataset \cite{Saenko10}by considering all pairs between the three domains: Amazon (A), Caltech-$256$ (C), and Webcam (W), similarly to Redko \textit{et al}. We randomly choose 20 samples per class and perform adaptation from CaffeNet to GoogleNet and repeat it 10 times. We report the average performance of each method along with standard deviation. Differently than Redko \textit{et al}, we (1) unit normalize the dataset prior to alignment as we empirically found it to boost all methods' average performance compared to using unnormalized datasets, (2) use cosine distances when defining intra-domain distance matrices for GW and AGW, as we found them to perform better than Euclidean distances, and (3) report results after hyperparameter tuning methods for each pair of datasets. Specifically, for each pair of (A)-(C), (A)-(W) etc, we sweep a hyperparameter grid over 5 runs of random sampling, choose the best performing combination, and run 10 runs of random sampling to report results. For all methods, we consider their version with no entropic regularization (either on the sample-level alignments, feature-level alignments or both), along with various levels of regularization. For entropic regularization over sample alignments, we consider $\epsilon_1 \in [ 5e-4, 1e-3, 5e-3, 1e-2, 5e-2, 0.1] $ for in methods.  For entropic regularization over feature alignments in COOT and AGW, we consider $\epsilon_2 \in [ 5e-4, 1e-3, 5e-3, 1e-2, 5e-2, 0.1] $. As interpolation coefficient of AGW, we consider $\alpha \in [ 0.1, 0.2, ..., 0.9]$.

\section{SNARE-seq and scGEM feature alignments}\label{appendix:scFeat}
Although we do not have ground-truth information on the feature correspondences in the SNARE-seq and scGEM datasets, to ensure comprehensive results, we present the feature alignments obtained from AGW on these datasets in Figure \ref{fig:SI-feats}. Since we align 1000 genes and 3000 chromatin regions from the SNARE-seq datasets, it is not possible to present all the feature correspondences obtained. Instead, we show the four cell-type marker genes and their top correspondences from the accessible chromatin regions in Panel A. The scientists who originally generated this dataset used the expression status of these four genes when labeling cell types in this dataset. Panel B shows the feature coupling matrix yielded by AGW, which can be interpreted in the light of the information presented in Panel C and Panel D. We detail the significance of the inferred correspondences below.

\begin{figure}[h]
    \centering
    \includegraphics[width=1.0\linewidth]{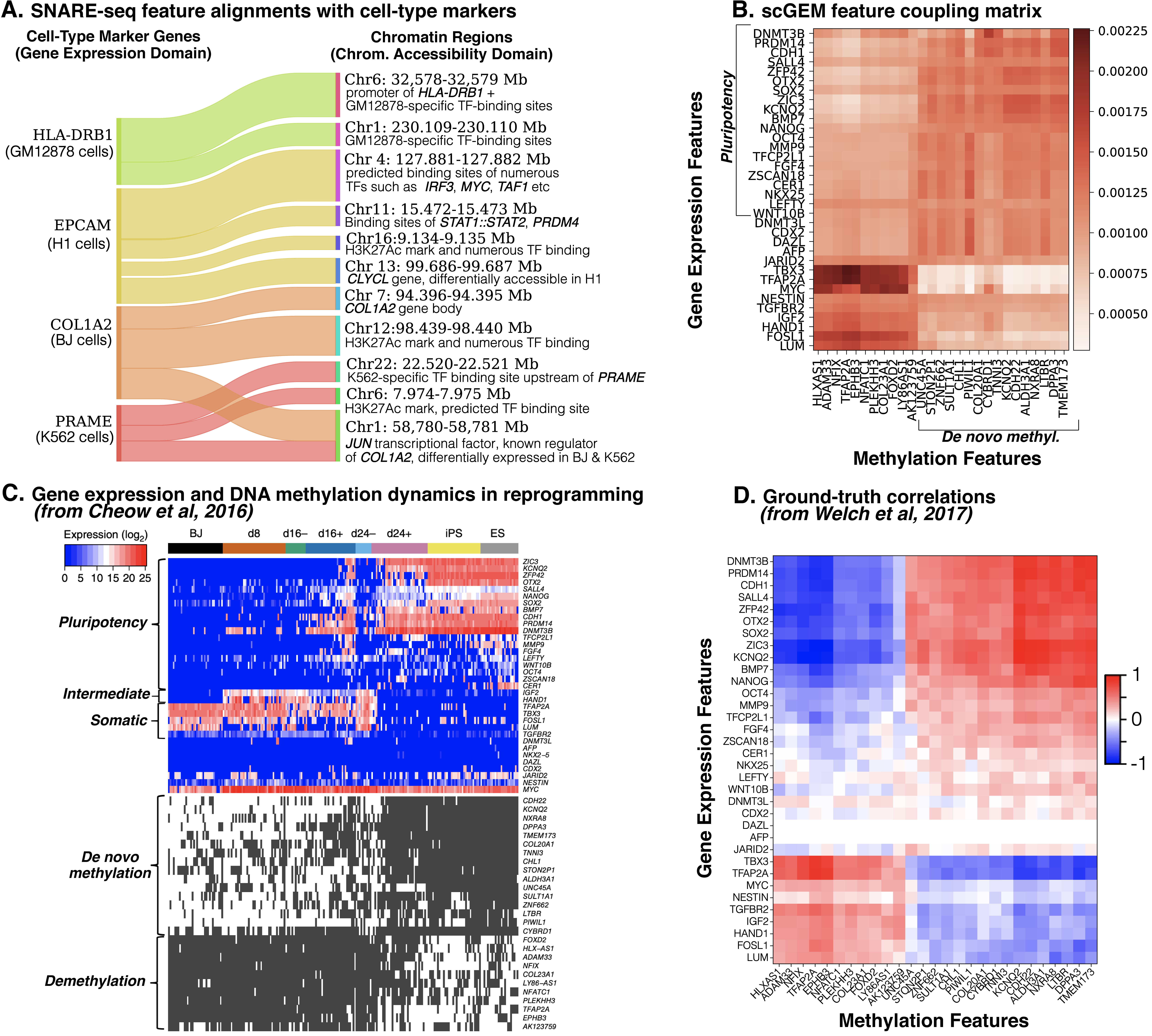}
    \caption{\textbf{AGW's feature alignments for A. SNARE-seq, and B. scGEM datasets}. The Sankey plot in Panel A presents the four cell-type marker genes and their top correspondences in the open chromatin regions. Panel B visualizes the feature coupling matrix for the scGEM dataset. Panel C is borrowed from the original publication introducing scGEM dataset, Cheow \textit{et al} \cite{cheow2016}, which shows how the genomic features in the two measurement domains (gene expression and DNA methylation) vary during cellular differentiation. Panel D is borrowed from Welch \textit{et al} that shows a heatmap of empirical correlations between the features of the two measurement domains, which we use for comparison with Panel B. \label{fig:SI-feats}}
\end{figure}
\paragraph{SNARE-seq feature correspondences: } Majority of the feature correspondences in Panel A are in agreement with either biologically validated or computationally predicted regulatory relationships. To validate these correspondences, we consult the biological annotations on the UCSC Genome Browser \cite{ucsc}, as well as gene regulatory information databases, such as GRNdb \cite{grndb} and RepMap Atlas of Regulatory Regions \cite{ReMap}.

Firstly, three of the alignments are between marker genes and their chromatin regions. These are (1) \textit{PRAME} and Chr22: 22.520-22.521 Mb region, which is a region upstream of the \textit{PRAME} gene body that is rich with predicted transcriptional factor (TF) binding sites according to the ``RepMap Atlas of Regulatory Regions'' \cite{ReMap} annotations on UCSC Genome Browser (Human hg38 annotations) \cite{ucsc}. Among the predicted TF bindings, many of them are K562-specific predictions, and some of these are known regulators of \textit{PRAME}, such as but not limited to \textit{E2F6}, \textit{HDAC2}, \textit{CTCF} (based on GRNdb database \cite{grndb} of TF-gene relationships). Additionally, (2) \textit{COL1A2} and (3) \textit{HLA-DRB1} also have recovered correspondences with their own chromosomal region, ``Chr7:94.395-94.396 Mb'' and ``Chr6:32,578-32,579 Mb'', respectively. We observe that \textit{COL1A2} and \textit{PRAME} are also additionally aligned with ``Chr1: 58,780 - 58,781 Mb'' region, which correspond to the gene body of \textit{JUN} transcriptional factor. Indeed, \textit{JUN} has been identified as one of the transcriptional factors differentially expressed in the K562 and BJ cells, but more strongly in the latter, according to the original publication that released this dataset \cite{SNAREseq}. GRNdb also identified \textit{JUN} to be one of the regulators of the \textit{COL1A2} gene. In addition to the chromosomal region of \textit{JUN}, \textit{PRAME} has another region abundant in predicted TF binding sites among its top correspondences: ``Chr6: 7.974-7.975 Mb''. This region is annotated with an H3K27Ac mark on the UCSC Genome Browser, which is a histone protein acetylation mark that is often found near gene regulatory elements on the genome \cite{ucsc}. Furthermore, this region contains multiple predicted binding sites of TFs GRNdb identifies as regulators of \textit{PRAME}, such as \textit{IRF1, HDAC2, HOXC6} and \text{POU2AF1}.  The \textit{HLA-DRB1} gene is also aligned with a chromosomal region rich in GM12878-specific predictions of TF bindings, such as \textit{IRF4}, \textit{IRF8}, \textit{ETV6}, and \textit{CREM}, which GRNdb lists as potential regulators of \textit{HLA-DRB1}. Lastly, even though we couldn't find a biological relationship reported between the \textit{CLYBL} gene and \textit{EPCAM} gene (marker gene for the H1 cell-line), the chromosomal region in \textit{CLYBL} body where AGW finds a correspondence with \textit{EPCAM} indeed appears to be differentially accessible in H1 cells in our dataset.  

\subsection{scGEM feature correspondences: } To interpret the feature coupling matrix we recover on the scGEM dataset, we consult the original publication that introduced this dataset \cite{cheow2016}. Figure \ref{fig:SI-feats} Panel C presenta a plot from this paper, which shows how the expression of genes that drive pluripotency during cell differentiation correlate or anti-correlate with the methylation of the genes in the ``DNA methylation'' domain. Based on the same pattern, Welch \textit{et al} generated the heatmap visualized in Panel D, which shows the underlying correlations between the expression and methylation levels of the genes in the two domains (i.e. gene expression and DNA methylation measurement domains). We observe that the feature coupling we receive from AGW (Panel B) resembles the structure in this ground-truth correlation matrix. Note that the features in the rows and columns of these matrices are ordered identically to aid with the visual comparison. Moreover, we see that the positive relationship between the expression profiles of pluripotency-driving genes and the methylation levels of associated genes is also recovered in this feature coupling matrix.

\subsection{Heterogeneous domain adaptation experiments}
We present the unsupervised and semi-supervised case with $t=3$ samples used for supervision in the main paper. Here, we additionally present the semi-supervised cases with $t=1$ and $t=5$ samples used for supervision in the table below:

\begin{table}[h]
\caption{\label{tabSI:hda1} \textbf{Heterogeneous domain adaptation results in the semi-supervised case with $t=1$ and $t=5$ samples used for supervision}. Best results are bolded. In AGW $(\alpha=0.5)$ columns, we also underline the results that outperform GW and COOT. In the ``AGW (Best $\alpha$)'' columns, the $\alpha$ values used are: $0.6, 0.2, 0.2, 0.9, 0.4, 0.5, 0.9, 0.3, 0.1$ top to bottom for the $t=1$ setting, and $0.3, 0.1, 0.7, 0.1, 0.5, 0.8, 0.9, 0.2, 0.9$ for the $t=5$ setting.}
\resizebox{.98\linewidth}{!}{
\begin{tabular}{lcccc|cccc}
\hline
                           & \multicolumn{4}{c|}{\textbf{Semi-supervised (t=1)}}                                                                                                                                         & \multicolumn{4}{c}{\textbf{Semi-supervised (t=5)}}                                                                                                                                         \\ \hline
                           & \textbf{COOT}   & \textbf{GW}             & \textbf{\begin{tabular}[c]{@{}c@{}}AGW \\ ($\alpha=0.5$)\end{tabular}} & \textbf{\begin{tabular}[c]{@{}c@{}}AGW\\ (Best $\alpha$)\end{tabular}} & \textbf{COOT}  & \textbf{GW}             & \textbf{\begin{tabular}[c]{@{}c@{}}AGW \\ ($\alpha=0.5$)\end{tabular}} & \textbf{\begin{tabular}[c]{@{}c@{}}AGW\\ (Best $\alpha$)\end{tabular}} \\ \hline
\textbf{A $\rightarrow$ A} & 87.1 $\pm$ 4.9  & 92.4 $\pm$ 1.8          & 90.5 $\pm$ 2.4                                                         & \textbf{93.1 $\pm$ 1.6}                                                & 93.6 $\pm$ 1.5 & 93.8 $\pm$ 2.1          & {\ul 94.7 $\pm$ 1.5}                                                   & \textbf{96.4 $\pm$ 1.3}                                                \\
\textbf{A $\rightarrow$ C} & 44.4 $\pm$ 4.9  & \textbf{90.6 $\pm$ 5.5} & 85.2 $\pm$ 6.0                                                         & 90.2 $\pm$ 5.1                                                         & 66.1 $\pm$ 3.8 & 91.9 $\pm$ 2.2          & {\ul 92.2 $\pm$ 1.8}                                                   & \textbf{96.4 $\pm$ 1.7}                                                \\
\textbf{A $\rightarrow$ W} & 54.5 $\pm$ 9.8  & 85.7 $\pm$ 3.9          & {\ul 87.9 $\pm$ 6.4}                                                   & \textbf{90.3 $\pm$ 3.5}                                                & 75.7 $\pm$ 3.5 & \textbf{93.3 $\pm$ 1.2} & 92.0 $\pm$ 2.4                                                         & 93.1 $\pm$ 2.9                                                         \\
\textbf{C $\rightarrow$ A} & 74.5 $\pm$ 3.1  & 77.6 $\pm$ 8.6          & 71.0 $\pm$ 9.9                                                         & \textbf{78.8 $\pm$ 7.7}                                                & 85.2 $\pm$ 2.1 & 85.2 $\pm$ 3.6          & 82.7 $\pm$ 2.2                                                         & \textbf{86.2 $\pm$ 2.3}                                                \\
\textbf{C $\rightarrow$ C} & 45.4 $\pm$ 12.2 & 82.8 $\pm$ 3.0          & {\ul 83.4 $\pm$ 3.6}                                                   & \textbf{84.2 $\pm$ 2.3}                                                & 64.7 $\pm$ 7.2 & 84.5 $\pm$ 2.7          & {\ul 86.6 $\pm$ 1.9}                                                   & \textbf{86.6 $\pm$ 1.9}                                                \\
\textbf{C $\rightarrow$ W} & 40.8 $\pm$ 12.2 & 73.3 $\pm$ 6.9          & {\ul \textbf{77.3 $\pm$  4.2}}                                         & \textbf{77.3 $\pm$ 4.2}                                                & 67.0 $\pm$ 7.5 & 81.9 $\pm$ 3.6          & 80.3 $\pm$ 2.1                                                         & \textbf{83.9 $\pm$ 1.4}                                                \\
\textbf{W $\rightarrow$ A} & 86.9 $\pm$ 2.3  & 91.9 $\pm$ 7.9          & 87.7 $\pm$ 7.3                                                         & \textbf{92.8 $\pm$ 3.8}                                                & 96.3 $\pm$ 1.9 & 96.6 $\pm$ 1.1          & 95.9 $\pm$ 1.5                                                         & \textbf{96.7 $\pm$ 1.1}                                                \\
\textbf{W $\rightarrow$ C} & 39.7 $\pm$ 3.6  & 81.9 $\pm$ 9.6          & {\ul 85.5 $\pm$ 5.9}                                                   & \textbf{90.3 $\pm$ 3.7}                                                & 60.5 $\pm$ 5.3 & 94.5 $\pm$ 3.1          & 91.9 $\pm$ 1.7                                                         & \textbf{95.7 $\pm$ 2.1}                                                \\
\textbf{W $\rightarrow$ W} & 76.1 $\pm$ 13.5 & 97.2 $\pm$ 1.4          & {\ul 98.2 $\pm$ 0.9}                                                   & \textbf{98.5 $\pm$ 0.8}                                                & 90.3 $\pm$ 1.9 & 98.4 $\pm$ 0.9          & {\ul 98.5 $\pm$ 0.9}                                                   & \textbf{98.7 $\pm$ 0.9}                                                \\ \hline
\textbf{Mean} & 61.0  &  85.9  & 85.2 & \textbf{88.4} & 77.7 &  91.1  & 90.5 & \textbf{93.7} \\ \hline                                                                       
\end{tabular}
}
\end{table}

\vspace{0.5cm}
\begin{table}[h]
\caption{\label{tabSI:hda2} \textbf{Heterogeneous domain adaptation results for the unsupervised setting and the semi-supervised setting with $t=3$ samples used for supervision}. Best results are bolded.  In AGW $(\alpha=0.5)$ columns, we also underline the results that outperform GW and COOT. In the ``AGW (Best $\alpha$)'' column, the $\alpha$ values used are: $0.6, 0.9, 0.7, 0.9, 0.3, 0.8, 0.7, 0.2, 0.6$ top to bottom for the unsupervised setting, and $0.2, 0.1, 0.2, 0.7, 0.2, 0.9, 0.8, 0.9, 0.4$ for the semi-supervised setting.}
\resizebox{.98\linewidth}{!}{
\begin{tabular}{@{}lcccc|cccc@{}}
\toprule
                           & \multicolumn{4}{c|}{\textbf{Unsupervised}}                                                                                                                                          & \multicolumn{4}{c}{\textbf{Semi-supervised (t=3)}}                                                                                                                                                     \\ \midrule
                           & \textbf{COOT}   & \textbf{GW}     & \textbf{\begin{tabular}[c]{@{}c@{}}AGW \\ ($\alpha$=0.5)\end{tabular}} & \textbf{\begin{tabular}[c]{@{}c@{}}AGW\\ (Best $\alpha$)\end{tabular}} & \textbf{COOT}  & \textbf{GW}                   & \textbf{\begin{tabular}[c]{@{}c@{}}AGW \\ ($\alpha$=0.5)\end{tabular}} & \textbf{\begin{tabular}[c]{@{}c@{}}AGW\\ (Best $\alpha$)\end{tabular}} \\ \midrule
\textbf{A $\rightarrow$ A} & 50.3 $\pm$ 15.9 & 86.2 $\pm$ 2.3  & \underline{90.5 $\pm$ 2.4}                                                   & \textbf{93.1 $\pm$ 1.6}                                 & 91.1 $\pm$ 2.0 & 93.2 $\pm$ 0.9                & \underline{93.8 $\pm$ 1.3}             & \textbf{96.0 $\pm$ 0.8}                                \\
\textbf{A $\rightarrow$ C} & 35.0 $\pm$ 6.4  & 64.1 $\pm$ 6.2  & \underline{68.2 $\pm$ 7.4}                                                   & \textbf{68.3 $\pm$ 14.1}                                & 59.7 $\pm$ 3.6 & 92.8 $\pm$ 2.1                & 90.7 $\pm$ 1.9                                                         & \textbf{93.5 $\pm$ 1.8}                                 \\
\textbf{A $\rightarrow$ W} & 39.8 $\pm$ 14.5 & 79.6 $\pm$ 11.1 & 75.5 $\pm$ 3.1                                                         & \textbf{79.8 $\pm$ 3.5}                                 & 72.6 $\pm$ 4.4 & 91.6 $\pm$ 1.8                & 91.4 $\pm$ 1.1                                                         & \textbf{93.8 $\pm$ 0.7}                                 \\
\textbf{C $\rightarrow$ A} & 40.8 $\pm$ 15.8 & 53.0 $\pm$ 13.2 & 48.5 $\pm$ 6.9                                                         & \textbf{55.4 $\pm$ 7.1}                                 & 83.1 $\pm$ 5.1 & 81.2 $\pm$ 1.2                & \underline{84.3 $\pm$ 1.6}                                                   & \textbf{85.6 $\pm$ 1.2}                                 \\
\textbf{C $\rightarrow$ C} & 33.4 $\pm$ 10.7 & \textbf{81.9 $\pm$ 30.5} & 68.5 $\pm$ 5.5                                                         & 76.4 $\pm$ 5.6                                          & 59.3 $\pm$ 8.4 & 85.3 $\pm$ 2.8                & 83.4 $\pm$ 2.3                                                         & \textbf{86.5 $\pm$ 2.1}                                 \\
\textbf{C $\rightarrow$ W} & 37.5 $\pm$ 10.4 & 53.5 $\pm$ 15.9 & \underline{56.6 $\pm$ 7.6}                                                   & \textbf{57.7 $\pm$ 14.3}                                & 64.6 $\pm$ 6.2 & 79.7 $\pm$ 2.5                & \underline{81.3 $\pm$ 4.3}                                                         & \textbf{83.2 $\pm$ 2.4}                                 \\
\textbf{W $\rightarrow$ A} & 44.3 $\pm$ 14.0 & 50.4 $\pm$ 22.1 & \underline{52.1 $\pm$ 3.8}                                                   & \textbf{60.1 $\pm$ 9.1}                                 & 94.3 $\pm$ 2.2 & 93.4 $\pm$ 5.2                & 92.3 $\pm$ 1.5                                                         & \textbf{97.1 $\pm$ 0.8}                                 \\
\textbf{W $\rightarrow$ C} & 27.4 $\pm$ 10.2 & 54.3 $\pm$ 14.7 & 53.6 $\pm$ 17.3                                                        & \textbf{60.9 $\pm$ 13.3}                                & 55.0 $\pm$ 7.1 & 90.9 $\pm$ 3.5                & 90.9 $\pm$ 2.0                                                         & \textbf{94.7 $\pm$ 1.1}                                 \\
\textbf{W $\rightarrow$ W} & 57.9 $\pm$ 13.4 & 92.5 $\pm$ 2.6  & 90.3 $\pm$ 5.4                                                         & \textbf{97.2 $\pm$ 0.9}                                 & 87.4 $\pm$ 4.4 & 97.4 $\pm$ 2.6\ & \underline{98.5 $\pm$ 0.7}                                                   & \textbf{98.7 $\pm$ 0.5}                                 \\ \hline
\textbf{Mean}    & 40.7 &   60.6 &   67.1 & 72.1 & 74.1 &  89.5  &   89.6 &  92.1  \\ \hline                                            

\end{tabular}
}
\end{table}

\newpage
\section{Empirical runtime comparison with COOT and GW}
% Consolidating numbers from output files now  -P.D.

For timing, we run all algorithms  on an Intel Xeon e5-2670 CPU with 16GB memory. For GW, we use the implementation in Python's POT library with its NumPy backend. We use the COOT implementation on \url{https://github.com/ievred/COOT}. Note that the strength of entropic regularization picked influences the runtime. We report hyperparameters that were picked for each case in the experiment replication scripts on \url{https://github.com/pinardemetci/AGW}.

% Please add the following required packages to your document preamble:
% \usepackage{booktabs}
\begin{table}[h]
\centering
\caption{\label{tab:mnistTime} \textbf{Runtime (in seconds) of each algorithm on the MNIST dataset in their best hyperparameter setting}. Timing does not include hyperparameter tuning time. We report the average of 5 runs with standard deviations.}
\begin{tabular}{@{}ccc@{}}
\toprule
\textbf{\begin{tabular}[c]{@{}c@{}}COOT\\ \end{tabular}} & \textbf{\begin{tabular}[c]{@{}c@{}}GW\\ \end{tabular}} & \textbf{\begin{tabular}[c]{@{}c@{}}AGW\\\end{tabular}} \\ \midrule
3.15 $\pm$ 1.3                                                                               & 51.35 $\pm$ 7.73                                                          & 11.71 $\pm$ 2.86              \\ \hline                                                                         
\end{tabular}
\end{table}

\vspace{0.5cm}
\begin{table}[h]
\caption{
\label{tab:hdaRuntime1} \textbf{Runtime (in seconds) of each algorithm in the heterogeneous domain adaptation experiments}. Timing does not include hyperparameter tuning time; it only includes 10 runs of the best hyperparameter combination for each algorithm.}
\resizebox{.98\linewidth}{!}{
\begin{tabular}{lcccc|cccc}
\hline
                           & \multicolumn{4}{c|}{\textbf{Semi-supervised (t=1)}}                                                                                                                           & \multicolumn{4}{c}{\textbf{Semi-supervised (t=5)}}                                                                                                                            \\ \hline
                           & \textbf{COOT} & \textbf{GW} & \textbf{\begin{tabular}[c]{@{}c@{}}AGW \\ ($\alpha=0.5$)\end{tabular}} & \textbf{\begin{tabular}[c]{@{}c@{}}AGW\\ (Best $\alpha$)\end{tabular}} & \textbf{COOT} & \textbf{GW} & \textbf{\begin{tabular}[c]{@{}c@{}}AGW \\ ($\alpha=0.5$)\end{tabular}} & \textbf{\begin{tabular}[c]{@{}c@{}}AGW\\ (Best $\alpha$)\end{tabular}} \\ \hline
\textbf{A $\rightarrow$ A} & 4.76s         & 2.37s       & 4.55s                                                                  & 7.50s                                                                  & 2.38s         & 1.70s       & 1.67s                                                                  & 2.07s                                                                  \\
\textbf{A $\rightarrow$ C} & 1.22s         & 3.05s       & 1.85s                                                                  & 3.83s                                                                  & 1.17s         & 2.49s       & 2.66s                                                                  & 1.81s                                                                  \\
\textbf{A $\rightarrow$ W} & 3.23s         & 4.65s       & 1.77s                                                                  & 3.84s                                                                  & 1.89s         & 3.04s       & 1.15s                                                                  & 1.77s                                                                  \\
\textbf{C $\rightarrow$ A} & 7.49s         & 3.85s       & 2.86s                                                                  & 4.13s                                                                  & 3.11s         & 2.96s       & 2.53s                                                                  & 1.78s                                                                  \\
\textbf{C $\rightarrow$ C} & 2.00s         & 1.50s       & 1.71s                                                                  & 1.99s                                                                  & 1.95s         & 1.19s       & 1.83s                                                                  & 1.83s                                                                  \\
\textbf{C $\rightarrow$ W} & 1.71s         & 4.74s       & 3.38s                                                                  & 3.38s                                                                  & 1.05s         & 2.01s       & 1.50s                                                                  & 1.76s                                                                  \\
\textbf{W $\rightarrow$ A} & 5.75s         & 9.17s       & 5.19s                                                                  & 3.51s                                                                  & 4.05s         & 2.95s       & 0.74s                                                                  & 1.42s                                                                  \\
\textbf{W $\rightarrow$ C} & 0.67s         & 9.05s       & 3.54s                                                                  & 3.15s                                                                  & 0.91s         & 3.82s       & 2.67s                                                                  & 1.70s                                                                  \\
\textbf{W $\rightarrow$ W} & 0.58s         & 3.73s       & 2.82s                                                                  & 3.79s                                                                  & 2.00s         & 3.77s       & 1.53s                                                                  & 2.27s      \\ \hline                                                           
\end{tabular}
}
\end{table}
\newpage
\bibliographystyle{unsrt}
\bibliography{main}

\begin{thebibliography}{10}

\bibitem{Villani}
C\'{e}dric Villani.
\newblock {\em {Optimal Transport: Old and New}}.
\newblock Grundlehren der mathematischen Wissenschaften. Springer, 2009
  edition, September 2008.

\bibitem{arjovsky17a}
Martin Arjovsky, Soumith Chintala, and L{\'e}on Bottou.
\newblock {W}asserstein generative adversarial networks.
\newblock In {\em ICML}, pages 214--223, 2017.

\bibitem{memoli_gw}
Facundo Memoli.
\newblock Gromov wasserstein distances and the metric approach to object
  matching.
\newblock {\em Foundations of Computational Mathematics}, pages 1--71, 2011.

\bibitem{peyre2016gromov}
Gabriel Peyr{\'e}, Marco Cuturi, and Justin Solomon.
\newblock Gromov-wasserstein averaging of kernel and distance matrices.
\newblock In {\em ICML}, pages 2664--2672, 2016.

\bibitem{WaddingtonOT}
Geoffrey Schiebinger, Jian Shu, Marcin Tabaka, Brian Cleary, Vidya Subramanian,
  Aryeh Solomon, Joshua Gould, Siyan Liu, Stacie Lin, Peter Berube, et~al.
\newblock Optimal-transport analysis of single-cell gene expression identifies
  developmental trajectories in reprogramming.
\newblock {\em Cell}, 176(4):928--943, 2019.

\bibitem{Pamona}
Kai Cao, Yiguang Hong, and Lin Wan.
\newblock {Manifold alignment for heterogeneous single-cell multi-omics data
  integration using Pamona}.
\newblock {\em Bioinformatics}, 08 2021.
\newblock btab594.

\bibitem{UniPort}
Kai Cao, Qiyu Gong, Yiguang Hong, and Lin Wan.
\newblock A unified computational framework for single-cell data integration
  with optimal transport.
\newblock {\em Nature Communications}, 13(1):7419, 2022.

\bibitem{SpaOTsc}
Zixuan Cang and Qing Nie.
\newblock Inferring spatial and signaling relationships between cells from
  single cell transcriptomic data.
\newblock {\em Nature communications}, 11(1):1--13, 2020.

\bibitem{Demetci2020}
Pinar Demetci, Rebecca Santorella, Bj{\"o}rn Sandstede, William~Stafford Noble,
  and Ritambhara Singh.
\newblock Gromov-wasserstein optimal transport to align single-cell multi-omics
  data.
\newblock {\em bioRxiv}, 2020.

\bibitem{Demetci2022}
Pinar Demetci, Rebecca Santorella, Manav Chakravarthy, Bjorn Sandstede, and
  Ritambhara Singh.
\newblock Scotv2: Single-cell multiomic alignment with disproportionate
  cell-type representation.
\newblock {\em Journal of Computational Biology}, 29(11):1213--1228, 2022.
\newblock PMID: 36251763.

\bibitem{PASTE}
Ron Zeira, Max Land, Alexander Strzalkowski, and Benjamin~J Raphael.
\newblock Alignment and integration of spatial transcriptomics data.
\newblock {\em Nature Methods}, 19(5):567--575, 2022.

\bibitem{bunne_gan}
Charlotte Bunne, David Alvarez-Melis, Andreas Krause, and Stefanie Jegelka.
\newblock {Learning Generative Models across Incomparable Spaces}.
\newblock In {\em ICML}, pages 851--861, 2019.

\bibitem{GW-VAE}
Nao Nakagawa, Ren Togo, Takahiro Ogawa, and Miki Haseyama.
\newblock Gromov-wasserstein autoencoders.
\newblock {\em arXiv preprint arXiv:2209.07007}, 2022.

\bibitem{FickingerC0A22}
Arnaud Fickinger, Samuel Cohen, Stuart Russell, and Brandon Amos.
\newblock Cross-domain imitation learning via optimal transport.
\newblock In {\em {ICLR}}, 2022.

\bibitem{kriebel2022uinmf}
April~R Kriebel and Joshua~D Welch.
\newblock Uinmf performs mosaic integration of single-cell multi-omic datasets
  using nonnegative matrix factorization.
\newblock {\em Nature communications}, 13(1):780, 2022.

\bibitem{COOT}
Ievgen Redko, Titouan Vayer, Rémi Flamary, and Nicolas Courty.
\newblock {CO-Optimal Transport}.
\newblock {\em arXiv:2002.03731}, 2020.

\bibitem{cuturi:2013}
Marco Cuturi.
\newblock Sinkhorn distances: Lightspeed computation of optimal transport.
\newblock In {\em NIPS}, pages 2292--2300, 2013.

\bibitem{flamary2021pot}
R{\'e}mi Flamary, Nicolas Courty, Alexandre Gramfort, Mokhtar~Z. Alaya,
  Aur{\'e}lie Boisbunon, Stanislas Chambon, Laetitia Chapel, Adrien Corenflos,
  Kilian Fatras, Nemo Fournier, L{\'e}o Gautheron, Nathalie~T.H. Gayraud,
  Hicham Janati, Alain Rakotomamonjy, Ievgen Redko, Antoine Rolet, Antony
  Schutz, Vivien Seguy, Danica~J. Sutherland, Romain Tavenard, Alexander Tong,
  and Titouan Vayer.
\newblock Pot: Python optimal transport.
\newblock {\em Journal of Machine Learning Research}, 22(78):1--8, 2021.

\bibitem{Konrad}
Isometries of {$\mathbb R^n$}.
\newblock
  \url{https://kconrad.math.uconn.edu/blurbs/grouptheory/isometryRn.pdf}.
\newblock Accessed: 2023-05-01.

\bibitem{Melis20}
David Alvarez-Melis and Nicolò Fusi.
\newblock Geometric dataset distances via optimal transport.
\newblock In {\em Advances in Neural Information Processing Systems},
  volume~33, 2020.

\bibitem{Gu2022}
Xiang Gu, Yucheng Yang, Wei Zeng, Jian Sun, and Zongben Xu.
\newblock Keypoint-guided optimal transport with applications in heterogeneous
  domain adaptation.
\newblock In {\em Advances in Neural Information Processing Systems},
  volume~35, pages 14972--14985, 2022.

\bibitem{vay2019fgw}
Titouan Vayer, Laetitia Chapel, R{\'e}mi Flamary, Romain Tavenard, and Nicolas
  Courty.
\newblock Optimal transport for structured data with application on graphs.
\newblock In {\em ICML}, pages 6275--6284, 2019.

\bibitem{bindSC}
Jinzhuang Dou, Shaoheng Liang, Vakul Mohanty, Qi~Miao, Yuefan Huang, Qingnan
  Liang, Xuesen Cheng, Sangbae Kim, Jongsu Choi, Yumei Li, Li~Li, May Daher,
  Rafet Basar, Katayoun Rezvani, Rui Chen, and Ken Chen.
\newblock Bi-order multimodal integration of single-cell data.
\newblock {\em Genome Biology}, 23(1):112, 2022.

\bibitem{CITEseq}
Marlon Stoeckius, Christoph Hafemeister, William Stephenson, Brian
  Houck-Loomis, Pratip~K Chattopadhyay, Harold Swerdlow, Rahul Satija, and
  Peter Smibert.
\newblock Simultaneous epitope and transcriptome measurement in single cells.
\newblock {\em Nature Methods}, 14(9):865--868, 2017.

\bibitem{SNAREseq}
Song Chen, Blue~B. Lake, and Kun Zhang.
\newblock High-throughput sequencing of the transcriptome and chromatin
  accessibility in the same cell.
\newblock {\em Nature Biotechnology}, 37(12):1452--1457, 2019.

\bibitem{scGEM}
Lih~Feng Cheow, Elise~T Courtois, Yuliana Tan, Ramya Viswanathan, Qiaorui Xing,
  Rui~Zhen Tan, Daniel S~Q Tan, Paul Robson, Loh Yuin-Han, Stephen~R Quake, and
  William~F Burkholder.
\newblock Single-cell multimodal profiling reveals cellular epigenetic
  heterogeneity.
\newblock {\em Nature Methods}, 13(10):833--836, 2016.

\bibitem{mouse}
Aritra Bhattacherjee, Mohamed~Nadhir Djekidel, Renchao Chen, Wenqiang Chen,
  Luis~M. Tuesta, and Yi~Zhang.
\newblock Cell type-specific transcriptional programs in mouse prefrontal
  cortex during adolescence and addiction.
\newblock {\em Nature Communications}, 10(1):4169, Sep 2019.

\bibitem{lizard}
Maria~Antonietta Tosches, Tracy~M. Yamawaki, Robert~K. Naumann, Ariel~A.
  Jacobi, Georgi Tushev, and Gilles Laurent.
\newblock Evolution of pallium, hippocampus, and cortical cell types revealed
  by single-cell transcriptomics in reptiles.
\newblock {\em Science}, 360(6391):881--888, 2018.

\bibitem{Saenko10}
K.~Saenko, B.~Kulis, M.~Fritz, and T.~Darrell.
\newblock Adapting visual category models to new domains.
\newblock In {\em ECCV}, LNCS, pages 213--226, 2010.

\bibitem{Szegedy15}
Christian Szegedy, Wei Liu, Yangqing Jia, Pierre Sermanet, Scott Reed, Dragomir
  Anguelov, Dumitru Erhan, Vincent Vanhoucke, and Andrew Rabinovich.
\newblock {Going Deeper with Convolutions}.
\newblock {\em IEEE Conference on Computer Vision and Pattern Recognition
  (CVPR)}, pages 1--9, 2015.

\bibitem{Jia14}
Yangqing Jia, Evan Shelhamer, Jeff Donahue, Sergey Karayev, Jonathan Long, Ross
  Girshick, Sergio Guadarrama, and Trevor Darrell.
\newblock {Caffe: Convolutional Architecture for Fast Feature Embedding}.
\newblock {\em Proceedings of the 22nd ACM International Conference on
  Multimedia}, pages 675--678, 2014.

\bibitem{Scetbon22}
Meyer Scetbon, Gabriel Peyré, and Marco Cuturi.
\newblock Linear-time gromov wasserstein distances using low rank couplings and
  costs.
\newblock In {\em International Conference on Machine Learning}, 2022.

\bibitem{Chowdhury21}
Samir Chowdhury, David Miller, and Tom Needham.
\newblock Quantized gromov-wasserstein.
\newblock In {\em Machine Learning and Knowledge Discovery in Databases.
  Research Track}, pages 811--827, 2021.

\bibitem{liu_et_al:LIPIcs:2019:11040}
Jie Liu, Yuanhao Huang, Ritambhara Singh, Jean-Philippe Vert, and
  William~Stafford Noble.
\newblock {Jointly Embedding Multiple Single-Cell Omics Measurements}.
\newblock In {\em 19th International Workshop on Algorithms in Bioinformatics
  (WABI 2019)}, volume 143, pages 10:1--10:13, 2019.

\bibitem{singh20}
Ritambhara Singh, Pinar Demetci, Giancarlo Bonora, Vijay Ramani, Choli Lee,
  He~Fang, Zhijun Duan, Xinxian Deng, Jay Shendure, Christine Disteche, and
  William~Stafford Noble.
\newblock Unsupervised manifold alignment for single-cell multi-omics data.
\newblock BCB '20, 2020.

\bibitem{cao2020unsupervised}
Kai Cao, Xiangqi Bai, Yiguang Hong, and Lin Wan.
\newblock Unsupervised topological alignment for single-cell multi-omics
  integration.
\newblock {\em Bioinformatics}, 36(Supplement\_1):i48--i56, 2020.

\bibitem{zappia2017splatter}
Luke Zappia, Belinda Phipson, and Alicia Oshlack.
\newblock Splatter: simulation of single-cell rna sequencing data.
\newblock {\em Genome biology}, 18(1):1--15, 2017.

\bibitem{cheow2016}
Lih~Feng Cheow, Elise~T Courtois, Yuliana Tan, Ramya Viswanathan, Qiaorui Xing,
  Rui~Zhen Tan, Daniel S~Q Tan, Paul Robson, Loh Yuin-Han, Stephen~R Quake, and
  William~F Burkholder.
\newblock Single-cell multimodal profiling reveals cellular epigenetic
  heterogeneity.
\newblock {\em Nature Methods}, 13(10):833--836, 2016.

\bibitem{ucsc}
Jairo Navarro Gonzalez, Ann~S Zweig, Matthew~L Speir, Daniel Schmelter,
  Kate R Rosenbloom, Brian~J Raney, Conner~C Powell, Luis~R Nassar, Nathan D
  Maulding, Christopher~M Lee, Brian~T Lee, Angie S Hinrichs, Alastair C
  Fyfe, Jason D Fernandes, Mark Diekhans, Hiram Clawson, Jonathan Casper, Anna
  Benet-Pagès, Galt~P Barber, David Haussler, Robert M Kuhn, Maximilian
  Haeussler, and W James Kent.
\newblock {The UCSC Genome Browser database: 2021 update}.
\newblock {\em Nucleic Acids Research}, 49(D1):D1046--D1057, 11 2020.

\bibitem{grndb}
Li~Fang, Yunjin Li, Lu~Ma, Qiyue Xu, Fei Tan, and Geng Chen.
\newblock {GRNdb: decoding the gene regulatory networks in diverse human and
  mouse conditions}.
\newblock {\em Nucleic Acids Research}, 49(D1):D97--D103, 11 2020.

\bibitem{ReMap}
Fayrouz Hammal, Pierre de Langen, Aurélie Bergon, Fabrice Lopez, and Benoit
  Ballester.
\newblock {ReMap 2022: a database of Human, Mouse, Drosophila and Arabidopsis
  regulatory regions from an integrative analysis of DNA-binding sequencing
  experiments}.
\newblock {\em Nucleic Acids Research}, 50(D1):D316--D325, 11 2021.

\end{thebibliography}
\end{document}